\title{
Gradient Descent Provably Optimizes \\Over-parameterized Neural Networks
}
\author{Simon S. Du\thanks{Equal contribution.}\\
Machine Learning Department\\
Carnegie Mellon University\\
\texttt{ssdu@cs.cmu.edu}
\And
Xiyu Zhai$^*$\\
Department of EECS\\
Massachusetts Institute of Technology\\
xiyuzhai@mit.edu
\And
Barnab\'{a}s Pocz\'{o}s\\
Machine Learning Department\\
Carnegie Mellon University\\
\texttt{bapozos@cs.cmu.edu}
\And
Aarti Singh\\
Machine Learning Department\\
Carnegie Mellon University\\
\texttt{aartisingh@cmu.edu}
}
\newcommand{\mat}{\mathbf}
\newcommand{\poly}{\mathrm{poly}}
\newcommand{\unif}{\mathrm{unif}}
\newcommand{\vect}[1]{\mathbf{#1}}
\newcommand{\norm}[1]{\left\|#1\right\|}
\newcommand{\abs}[1]{\left|#1\right|}
\newcommand{\expect}{\mathbb{E}}
\newcommand{\prob}{\mathbb{P}}
\newcommand{\indict}{\mathbb{I}}
\newcommand{\sign}{\text{sign}}
\newcommand{\relu}[1]{\sigma\left(#1\right)}
\newtheorem{thm}{Theorem}[section]
\newtheorem{lem}{Lemma}[section]
\newtheorem{cor}{Corollary}[section]
\newtheorem{asmp}{Assumption}[section]
\newtheorem{rem}{Remark}[section]
\newtheorem{condition}{Condition}[section]
\begin{document}

\maketitle

\begin{abstract}
	One of the mysteries in the success of neural networks is randomly initialized first order methods like gradient descent can achieve zero training loss even though the objective function is non-convex and non-smooth. This paper demystifies this surprising phenomenon for two-layer fully connected ReLU activated neural networks. For an $m$ hidden node shallow neural network with ReLU activation and $n$ training data, we show as long as $m$ is large enough and no two inputs are parallel, randomly initialized gradient descent converges to a \emph{globally} optimal solution at a \emph{linear} convergence rate for the quadratic loss function.

Our analysis relies on the following observation: over-parameterization and random initialization jointly restrict every weight vector to be close to its initialization for all iterations, which allows us to exploit a strong convexity-like property to show that gradient descent converges at a global linear rate to the global optimum. We believe these insights are also useful in analyzing deep models and other first order methods.

\end{abstract}

\section{Introduction}
\label{sec:intro}
%neural network is good but theory not known
Neural networks trained by first order methods have achieved a remarkable impact on many applications,  but their theoretical properties are still mysteries.
%raise the problem in training neural network
One of the  empirical observation is even though the optimization objective function is non-convex and non-smooth, randomly initialized first order methods like stochastic gradient descent can still find a global minimum.
Surprisingly, this property is not correlated with labels.
In \cite{zhang2016understanding}, authors replaced the true labels with randomly generated labels, but still found randomly initialized first order methods can always achieve zero training loss.

%overparameterization
A widely believed explanation on why a neural network can fit all training labels is that the neural network is over-parameterized. 
For example, Wide ResNet~\citep{zagoruyko2016wide} uses 100x parameters than the number of training data. 
Thus there must exist one such neural network of this architecture that can fit all training data.
However, the existence does not imply why the network found by a randomly initialized first order method can fit all the data.
The objective function is neither smooth nor convex, which makes traditional analysis technique from convex optimization not useful in this setting.
To our knowledge, only the convergence to a stationary point is known~\citep{davis2018stochastic}.

%our contribution
In this paper we demystify this surprising phenomenon on two-layer neural networks with rectified linear unit (ReLU) activation. 
Formally, we consider a neural network of the following form.
\begin{align}
f(\mat{W},\vect{a},\vect{x}) = \frac{1}{\sqrt{m}}\sum_{r=1}^{m}a_r \relu{\vect{w}_r^\top \vect{x}}
\end{align}
where $\vect{x} \in \mathbb{R}^d$ is the input, $\vect{w}_r \in \mathbb{R}^d$ is the weight vector of the first layer, $a_r \in \mathbb{R}$ is the output weight and $\relu{\cdot}$ is the ReLU activation function: $\relu{z} = z$ if $z \ge 0$ and $\relu{z} = 0$ if $z < 0$ .

We focus on the empirical risk minimization problem with a quadratic loss.
Given a training data set $\left\{(\vect{x}_i,y_i)\right\}_{i=1}^n$, we want to minimize \begin{align}
 L(\mat{W},\vect{a}) = \sum_{i=1}^{n}\frac{1}{2}\left(f(\mat{W},\vect{a},\vect{x}_i)-y_i\right)^2. \label{eqn:loss_regression}
\end{align}
Our main focus of this paper is to analyze the following procedure.
We fix the second layer and apply gradient descent (GD) to optimize the first layer\footnote{In Section~\ref{sec:joint_training}, we also extend our technique to analyze the setting where we train both layers jointly.}\begin{align}
\mat{W}(k+1) = \mat{W}(k) - \eta \frac{\partial L(\mat{W}(k),\vect{a})}{\partial \mat{W}(k)}. \label{eqn:gd}
\end{align} where $\eta > 0$ is the step size.
Here the gradient formula for each weight vector is \footnote{
	Note ReLU is not continuously differentiable. 
	One can view $\frac{\partial L(\mat{W})}{\partial \mat{w}_r}$
	as a convenient notation for the right hand side of~\eqref{eqn:gradient} and this is the update rule used in practice. }
\begin{align}
\frac{\partial L(\mat{W},\vect{a})}{\partial \vect{w}_r} 
%=& \sum_{i=1}^n(f(\mat{W},\vect{a},\vect{x}_i) - y_i)\frac{\partial f(\mat{W},\vect{a},\vect{x}_i)}{\partial \vect{w}_r} \nonumber \\
= \frac{1}{\sqrt{m}} \sum_{i=1}^n(f(\mat{W},\vect{a},\vect{x}_i)-y_i)\vect{a}_r\vect{x}_i\indict\left\{\vect{w}_r^\top \vect{x}_i \ge 0\right\}. \label{eqn:gradient}
\end{align}

Though this is only a shallow fully connected neural network, the objective function is still non-smooth and non-convex due to the use of ReLU activation function.
\footnote{We remark that if one fixes the first layer and only optimizes the output layer, then the problem becomes a convex and smooth one. If $m$ is large enough,  one can show the global minimum has zero training loss~\citep{nguyen2018optimization}. 
Though for both cases (fixing the first layer and fixing the output layer), gradient descent achieves zero training loss, the learned prediction functions are different.
	}
Even for this simple function, why randomly initialized first order method can achieve zero training error is not known.
Many previous works have tried to answer this question or similar ones.
Attempts include landscape analysis~\citep{soudry2016no}, partial differential equations~\citep{mei2018mean}, analysis of the dynamics of the algorithm ~\citep{li2017convergence}, optimal transport theory~\citep{chizat2018global}, to name a few.
These results often make strong assumptions on the labels and input distributions or do not imply why randomly initialized first order method can achieve zero training loss.
See Section~\ref{sec:comparison} for detailed comparisons between our result and previous ones.
 
In this paper, we rigorously prove that as long as no two inputs are parallel and $m$ is large enough, with randomly initialized $\vect{a}$ and  $\mat{W}(0)$, gradient descent achieves zero training loss at a linear convergence rate, i.e., it finds a solution $\mat{W}(K)$ with $L(\mat{W}(K)) \le \epsilon$ in $K = O(\log\left(1/\epsilon\right))$ iterations.\footnote{Here we omit the polynomial dependency on $n$ and other data-dependent quantities.}
Thus, our theoretical result not only shows the global convergence but also gives a quantitative convergence rate in terms of the desired accuracy.
% To our knowledge, this is the first result of such kind. 

\paragraph{Analysis Technique Overview}
Our proof relies on the following insights.
First we directly analyze the dynamics of each individual prediction $f(\mat W,\vect{a}, \vect{x}_i)$ for $i=1,\ldots,n$.
This is different from many previous work~\citep{du2017convolutional,li2017convergence} which tried to analyze the dynamics of the parameter ($\mat W$) we are optimizing.
Note because the objective function is non-smooth and non-convex, analysis of the parameter space dynamics is very difficult.
In contrast, we find the dynamics of prediction space is governed by the spectral property of a Gram matrix (which can vary in each iteration, c.f. Equation~\eqref{eqn:H_t}) and as long as this Gram matrix's least eigenvalue is lower bounded, gradient descent enjoys a linear rate.
It is easy to show as long as no two inputs are parallel, in the initialization phase, this Gram matrix has a lower bounded least eigenvalue. (c.f.~Theorem~\ref{thm:least_eigen}).
Thus the problem reduces to showing the  Gram matrix at later iterations is close to that in the initialization phase.
Our second observation is this Gram matrix is only related to the activation patterns ($\indict\left\{\vect{w}_r^\top \vect{x}_i \ge 0\right\}$) and we can use matrix perturbation analysis to show if most of the patterns do not change, then this Gram matrix is close to its initialization.
Our third observation is we find over-parameterization, random initialization, and the linear convergence jointly restrict every weight vector $\vect{w}_r$ to be close to its initialization.
Then we can use this property to show most of the patterns do not change.
Combining these insights we prove the first global quantitative convergence result of gradient descent on ReLU activated neural networks for the empirical risk minimization problem.
Notably, our proof only uses linear algebra and standard probability bounds so we believe it can be easily generalized to analyze deep neural networks.

\paragraph{Notations}
\label{sec:notation}
%We use bold-faced letters for vectors and matrices.
We let $[n] = \{1, 2, \ldots, n\}$.
Given a set $S$, we use $\unif\left\{S\right\}$ to denote the uniform distribution over $S$.
Given an event $E$, we use $\indict\left\{A\right\}$ to be the indicator on whether this event happens.
We use $N(\vect{0},\mat{I})$ to denote the standard Gaussian distribution.
For a matrix $\vect A$, we use $\mat A_{ij}$ to denote its $(i, j)$-th entry.
We use $\norm{\cdot}_2$ to denote the Euclidean norm of a vector, and use $\norm{\cdot}_F$ to denote the Frobenius norm of a matrix.
If a matrix $\mat{A}$ is positive semi-definite, we use $\lambda_{\min}(\mat{A})$ to denote its smallest eigenvalue.
We use $\langle \cdot, \cdot \rangle$ to denote the standard Euclidean inner product between two vectors.
%Lastly, let $O(\cdot)$ and $\Omega\left(\cdot\right)$ denote standard Big-O and Big-Omega notations, only hiding absolute constants.

\section{Comparison with Previous Results}
\label{sec:comparison}
In this section, we survey an incomplete list of previous attempts in analyzing why first order methods can find a global minimum.

\paragraph{Landscape Analysis}
%landscape analysis
A popular way to analyze non-convex optimization problems is to identify whether the optimization landscape has some good geometric properties.
Recently, researchers found if the objective function is smooth and satisfies (1) all local minima are global and (2) for every saddle point, there exists a negative curvature, then the noise-injected (stochastic) gradient descent~\citep{jin2017escape,ge2015escaping,du2017gradient} can find a global minimum in polynomial time.
This algorithmic finding encouraged researchers to study whether the deep neural networks also admit these properties.

For the objective function defined in Equation~\eqref{eqn:loss_regression}, some partial results were obtained.
\citet{soudry2016no} showed if $md \ge n$, then at every differentiable local minimum, the training error is zero.
However, since the objective is non-smooth, it is hard to show gradient descent convergences to a differentiable local minimum.
\citet{xie2017diverse} studied the same problem and related the loss to the gradient norm through the least singular value of the ``extended feature matrix" $\mat{D}$ at the stationary points.
However, they did not prove the convergence rate of the gradient norm.
Interestingly, our analysis relies on the Gram matrix which is $\mat{D} \mat{D}^\top$.

Landscape analyses of ReLU activated neural networks for other settings have also been studied in many previous works~\citep{ge2017learning,safran2016quality,zhou2017critical,freeman2016topology,hardt2016identity,nguyen2018optimization}.
These works establish favorable landscape properties but none of them implies  that gradient descent converges to a global minimizer of the empirical risk. 
More recently, some negative results have also been discovered~\citep{safran2017spurious,yun2018critical} and new procedures have been proposed to test local optimality and escape strict saddle points at non-differentiable points~\citep{yun2018efficiently}.
However, the new procedures cannot find global minima as well.
For other activation functions, some previous works showed the landscape does have the desired geometric properties~\citep{du2018power,soltanolkotabi2018theoretical,nguyen2017loss,kawaguchi2016deep,haeffele2015global,andoni2014learning,venturi2018neural,yun2018critical}.
However, it is unclear how to extend their analyses to our setting.

%Although the objective is not smooth, so generic optimization do not ensure that gradient descent decreases the prediction error, we show that ...

% Another drawback of the landscape analysis is that only if noise-injected (stochastic) gradient descent is used, one can obtain polynomial convergence rates.
% In fact, \cite{du2017gradient} showed without noise-injection, gradient descent can take exponential time to escape saddle points even if the landscape admit desired geometric properties.
% However, noise-injection is often not being used.
% Thus, it is hard to use landscape analysis to explain why practically used algorithm can achieve zero training loss.

\paragraph{Analysis of Algorithm Dynamics}
Another way to prove convergence result is to analyze the dynamics of first order methods directly.
Our paper also belongs to this category.
%Gaussian input
Many previous works assumed (1) the input distribution is Gaussian and (2) the label is generated according to a planted neural network.
Based on these two (unrealistic) conditions, it can be shown that randomly initialized (stochastic) gradient descent can learn a ReLU~\citep{tian2017analytical,soltanolkotabi2017learning}, a single convolutional filter~\citep{brutzkus2017globally}, a convolutional neural network with one filter and one output layer~\citep{du2017spurious} and residual network with small spectral norm weight matrix~\citep{li2017convergence}.\footnote{Since these work assume the label is realizable, converging to global minimum is equivalent to recovering the underlying model.}
Beyond Gaussian input distribution, \citet{du2017convolutional} showed for learning a convolutional filter, the Gaussian input distribution assumption can be relaxed but they still required the label is generated from an underlying true filter.
Comparing with these work, our paper does not try to recover the underlying true neural network. 
Instead, we focus on providing theoretical justification on why randomly initialized gradient descent can achieve zero training loss, which is what we can observe and verify in practice.

%NTK
\citet{jacot2018neural} established an asymptotic result showing for the multilayer fully-connected neural network with a smooth activation function, if every layer's weight matrix is infinitely wide, then for finite training time, the convergence of gradient descent can be characterized by a kernel.
Our proof technique relies on a Gram matrix which is the kernel matrix in their paper.
Our paper focuses on the two-layer neural network with ReLU activation function (non-smooth) and we are able to prove the Gram matrix is stable for infinite training time.

%yuanzhi's paper
The most related paper is by \cite{li2018learning} who observed that when training a two-layer full connected neural network, most of  the patterns ($\indict\left\{\vect{w}_r^\top \vect{x}_i \ge 0\right\}$) do not change over iterations, which
we also use to show the stability of the Gram matrix.
They used this observation to obtain the convergence rate of GD on a two-layer over-parameterized neural network  for the cross-entropy loss.
They need the number of hidden nodes $m$ scales with $\poly(1/\epsilon)$ where $\epsilon$ is the desired accuracy.
Thus unless the number of hidden nodes $m \rightarrow \infty$, their result does not imply GD can achieve zero training loss.
We improve by allowing the amount of over-parameterization to be independent of the desired accuracy and show GD can achieve zero training loss.
Furthermore, our proof is much simpler and more transparent so we believe it can be easily generalized to analyze other neural network architectures.

\paragraph{Other Analysis Approaches}
\cite{chizat2018global} used optimal transport theory to analyze continuous time gradient descent on over-parameterized models.
They required the second layer to be infinitely wide and  their results on ReLU activated neural network is only at the formal level.
\citet{mei2018mean} analyzed SGD for optimizing the population loss and showed the dynamics can be captured by a partial differential equation in the suitable scaling limit.
They listed some specific examples on input distributions including mixture of Gaussians.
However, it is still unclear whether this framework can explain why first order methods can minimize the empirical risk.
\citet{daniely2017sgd} built connection between neural networks with kernel methods and showed stochastic gradient descent can learn a function that is competitive with the best function in the conjugate kernel space of the network.
Again this work does not imply why first order methods can achieve zero training loss.

\section{Continuous Time Analysis}
\label{sec:two_layer_regression}
In this section, we present our result for gradient flow, i.e., gradient descent with infinitesimal step size. 
The analysis of gradient flow is a stepping stone towards understanding discrete algorithms and this is the main topic of recent work~\citep{arora2018optimization,du2018algorithmic}.
In the next section, we will modify the proof and give a quantitative bound for gradient descent with positive step size.
Formally, we consider the ordinary differential equation\footnote{Strictly speaking, this should be differential inclusion~\citep{davis2018stochastic}} defined by:\begin{align*}
\frac{d\vect{w}_r(t)}{dt} = -\frac{\partial L(\mat{W}(t),\vect{a})}{\partial \vect{w}_r(t)}
%= & \sum_{i=1}^n(y_i-f(\mat{W}(t),\vect{a},\vect{x}_i))\frac{1}{\sqrt{m}}a_r\vect{x}_i\indict\left\{\vect{w}_r(t)^\top \vect{x}_i \ge 0\right\}.
\end{align*}
for $r \in [m]$.
We denote $u_i(t) = f(\mat{W}(t),\vect{a},\vect{x}_i)$ the prediction on input $\vect{x}_i$ at time $t$ and we let $\vect{u}(t) = \left(u_1(t),\ldots,u_n(t)\right) \in \mathbb{R}^n$ be the prediction vector at time $t$.
We state our main assumption.
\begin{asmp}\label{asmp:main}
Define matrix $\mat{H}^\infty \in \mathbb{R}^{n \times n}$ with $\mat{H}_{ij} ^\infty= \expect_{\vect{w} \sim N(\vect{0},\mat{I})}\left[\vect{x}_i^\top \vect{x}_j\indict\left\{\vect{w}^\top \vect{x}_i \ge 0, \vect{w}^\top \vect{x}_j \ge 0\right\}\right]$. We assume $\lambda_0 \triangleq \lambda_{\min}\left(\mat{H}^{\infty}\right) > 0$.
\end{asmp}
$\mat{H}^\infty$ is the Gram matrix induced by the ReLU activation function and the random initialization.
%Why this is a fundamental 
Later we will show that during the training, though the Gram matrix may change (c.f. Equation~\eqref{eqn:H_t}), it is still close to $\mat{H}^{\infty}$.
Furthermore, as will be apparent in the proof (c.f. Equation~\eqref{eqn:key_dynamics}), $\mat{H}^{\infty}$ is the fundamental quantity that determines the convergence rate.
Interestingly, various properties of this $\mat{H}^{\infty}$ matrix has been studied in previous works~\citep{xie2017diverse,tsuchida2017invariance}.
Now to justify this assumption, the following theorem shows if no two inputs are parallel the least eigenvalue is strictly positive.
\begin{thm}\label{thm:least_eigen}
If for any $i \neq j$, $\vect{x}_i \not \parallel \vect{x}_j$, then $\lambda_0 > 0$.
\end{thm}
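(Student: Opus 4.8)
The plan is to read $\mathbf{H}^\infty$ as a Gram matrix and reduce the claim $\lambda_0>0$ to a linear‑independence statement. Let $\mathcal H$ be the Hilbert space of (equivalence classes of) square‑integrable maps $\mathbb R^d\to\mathbb R^d$ with respect to the Gaussian measure $N(\mathbf 0,\mathbf I)$, equipped with the inner product $\langle\psi,\varphi\rangle_{\mathcal H}=\expect_{\vect w\sim N(\mathbf 0,\mathbf I)}\big[\psi(\vect w)^\top\varphi(\vect w)\big]$. For $i\in[n]$ define $\phi_i(\vect w)=\vect x_i\,\indict\{\vect w^\top\vect x_i\ge 0\}$; then $\|\phi_i\|_{\mathcal H}^2\le\|\vect x_i\|_2^2<\infty$ and, directly from the definition, $\mathbf H^\infty_{ij}=\langle\phi_i,\phi_j\rangle_{\mathcal H}$. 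Hence for every $\vect c\in\mathbb R^n$, $\vect c^\top\mathbf H^\infty\vect c=\big\|\sum_{i=1}^n c_i\phi_i\big\|_{\mathcal H}^2\ge 0$, so $\mathbf H^\infty$ is positive semidefinite and $\lambda_0>0$ holds \emph{if and only if} $\phi_1,\dots,\phi_n$ are linearly independent in $\mathcal H$. This is the reduction I would set up first.

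Next I would prove the independence using the non‑parallel hypothesis. Suppose $\sum_i c_i\phi_i=0$ in $\mathcal H$; since the Gaussian density is strictly positive, this means $\sum_i c_i\vect x_i\,\indict\{\vect w^\top\vect x_i\ge 0\}=\mathbf 0$ for Lebesgue‑almost every $\vect w$. Assume for contradiction $c_{i_0}\ne 0$ for some $i_0$. The hypothesis $\vect x_i\not\parallel\vect x_j$ for $i\ne j$ forces every $\vect x_i\ne\mathbf 0$ and makes the hyperplanes $H_i\triangleq\{\vect w:\vect w^\top\vect x_i=0\}$ pairwise distinct, so each $H_{i_0}\cap H_j$ with $j\ne i_0$ is a proper subspace of $H_{i_0}$; since a real vector space is not a finite union of proper subspaces, there is $\vect z\in H_{i_0}$ with $\vect z^\top\vect x_j\ne 0$ for all $j\ne i_0$. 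Choose $r>0$ small enough that $\vect w^\top\vect x_j$ keeps a constant nonzero sign over $B(\vect z,r)$ for each $j\ne i_0$; then $\sum_{j\ne i_0}c_j\vect x_j\,\indict\{\vect w^\top\vect x_j\ge 0\}$ equals a fixed vector $\vect v$ on all of $B(\vect z,r)$. Both half‑balls $B(\vect z,r)\cap\{\vect w^\top\vect x_{i_0}>0\}$ and $B(\vect z,r)\cap\{\vect w^\top\vect x_{i_0}<0\}$ have positive measure (since $\vect z\in H_{i_0}$), and on them the full sum equals $c_{i_0}\vect x_{i_0}+\vect v$ and $\vect v$ respectively; almost‑everywhere vanishing then gives $\vect v=\mathbf 0$ and $c_{i_0}\vect x_{i_0}=\mathbf 0$, contradicting $c_{i_0}\ne 0$ and $\vect x_{i_0}\ne\mathbf 0$. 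Hence all $c_i=0$, the $\phi_i$ are independent, and $\lambda_0>0$.

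The step I expect to be most delicate is the second one: passing cleanly from vanishing in $L^2$ to a usable pointwise identity on sets of positive measure, and—more essentially—the geometric observation that under the hypothesis one can locate a direction $\vect z$ lying on exactly one hyperplane of the arrangement, which lets a single ReLU activation indicator serve as a witness to linear independence. This is precisely where ``no two inputs parallel'' enters, and it also shows the condition is tight: if $\vect x_i=c\,\vect x_j$ with $c>0$ then $\phi_i=c\,\phi_j$ and $\mathbf H^\infty$ is singular. The remaining ingredients—the Gram/PSD identity and the fact that a real vector space is not covered by finitely many proper subspaces—are standard.
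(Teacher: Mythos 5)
Your proof is correct and follows the same overall strategy as the paper's: read $\mat{H}^\infty$ as the Gram matrix of the feature maps $\phi_i(\vect{w})=\vect{x}_i\indict\{\vect{w}^\top\vect{x}_i\ge 0\}$ in the Gaussian $L^2$ space, reduce $\lambda_0>0$ to linear independence of the $\phi_i$, and establish independence by localizing at a point $\vect{z}$ lying on the hyperplane $H_{i_0}$ but on none of the others. You differ from the paper in two sub-steps, both to your advantage in simplicity. First, you obtain the existence of such a $\vect{z}$ from the algebraic fact that a real vector space is not a finite union of proper subspaces, whereas the paper proves the same fact (its Lemma on $D_i\not\subset\bigcup_{j\neq i}D_j$) by a measure argument on the hyperplane. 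Second, and more substantially, once you have the small ball $B(\vect{z},r)$ on which every $\vect{w}^\top\vect{x}_j$ ($j\neq i_0$) has constant sign, you simply observe that the sum is piecewise constant there, equal to $c_{i_0}\vect{x}_{i_0}+\vect{v}$ and $\vect{v}$ on the two half-balls of positive measure, so a.e.-vanishing forces both constants to be zero; the paper instead runs a limiting argument, comparing averages of $\phi(\vect{x}_j)$ over shrinking half-balls $B_r^{\pm}$ and invoking continuity of the other feature maps as $r\to 0^+$. Your direct evaluation avoids that limit computation entirely and is cleaner; the paper's averaging argument is slightly more robust in spirit (it only needs continuity of the other $\phi_j$ near $\vect{z}$, not exact local constancy), but for these indicator-type features the two are equivalent. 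Your closing remark that the condition is tight (if $\vect{x}_i=c\vect{x}_j$ with $c>0$ then $\phi_i=c\phi_j$) is a nice addition not present in the paper.
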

Note for most real world datasets, no two inputs are parallel, so our assumption holds in general.
Now we are ready to state our main theorem in this section.
\begin{thm}[Convergence Rate of Gradient Flow]\label{thm:main_gf}
Suppose Assumption~\ref{asmp:main} holds and for all $i \in [n]$, $\norm{\vect{x}_i}_2 = 1$  and $\abs{y_i} \le C$ for some constant $C$.
Then if we set the number of hidden nodes $m = 
 \Omega\left(\frac{n^6}{\lambda_0^4\delta^3}\right) 
$ and we i.i.d. initialize $\vect{w}_r \sim N(\vect{0},\mat{I})$, $a_r \sim \unif\left[\left\{-1,1\right\}\right]$ for $r \in [m]$, then with probability at least $1-\delta$ over the initialization, we have \begin{align*}
	\norm{\vect{u}(t)-\vect{y}}_2^2 \le \exp(-\lambda_0 t)\norm{\vect{u}(0)-\vect{y}}_2^2.
\end{align*}
\end{thm}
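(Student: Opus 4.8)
The plan is to track the evolution of the prediction vector $\vect{u}(t)$ rather than the weights $\mat{W}(t)$. Differentiating $u_i(t)=f(\mat{W}(t),\vect{a},\vect{x}_i)$ in $t$, substituting the gradient flow equation and~\eqref{eqn:gradient}, and using $a_r^2=1$, one obtains the closed dynamics
\begin{align*}
\frac{d\vect{u}(t)}{dt} = \mat{H}(t)\left(\vect{y}-\vect{u}(t)\right), \qquad \mat{H}_{ij}(t) = \frac{1}{m}\sum_{r=1}^m \vect{x}_i^\top\vect{x}_j\,\indict\left\{\vect{w}_r(t)^\top\vect{x}_i \ge 0,\ \vect{w}_r(t)^\top\vect{x}_j \ge 0\right\},
\end{align*}
where $\mat{H}(t)$ is positive semidefinite, being the Gram matrix of the feature vectors $\vect{x}_i\indict\{\vect{w}_r(t)^\top\vect{x}_i\ge 0\}/\sqrt m$. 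Hence $\frac{d}{dt}\norm{\vect{u}(t)-\vect{y}}_2^2 = -2(\vect{u}(t)-\vect{y})^\top\mat{H}(t)(\vect{u}(t)-\vect{y}) \le -2\lambda_{\min}(\mat{H}(t))\norm{\vect{u}(t)-\vect{y}}_2^2$, so the theorem reduces entirely to showing $\lambda_{\min}(\mat{H}(t)) \ge \lambda_0/2$ for every $t\ge 0$, after which Grönwall's inequality gives the stated decay.

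To control $\lambda_{\min}(\mat{H}(t))$ I would first treat $t=0$: each entry $\mat{H}_{ij}(0)$ is an average of $m$ i.i.d.\ bounded terms with mean $\mat{H}^\infty_{ij}$, so a concentration (or Markov) bound together with a union bound over the $n^2$ entries gives $\norm{\mat{H}(0)-\mat{H}^\infty}_2 \le \norm{\mat{H}(0)-\mat{H}^\infty}_F \le \lambda_0/4$ with high probability once $m = \tilde\Omega(n^2/\lambda_0^2)$, hence $\lambda_{\min}(\mat{H}(0)) \ge 3\lambda_0/4$ by Weyl's inequality and Assumption~\ref{asmp:main}. For $t>0$, the key structural fact is that $\mat{H}_{ij}(t)$ differs from $\mat{H}_{ij}(0)$ only through hidden units $r$ whose activation pattern on $\vect{x}_i$ or $\vect{x}_j$ has flipped, and a flip of unit $r$ on $\vect{x}_i$ forces $\abs{\vect{w}_r(0)^\top\vect{x}_i}\le\norm{\vect{w}_r(t)-\vect{w}_r(0)}_2$. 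So if every weight stays within radius $R$ of its initialization, the number of units that can flip on a fixed $\vect{x}_i$ is at most $S_i:=\#\{r:\abs{\vect{w}_r(0)^\top\vect{x}_i}\le R\}$, and since $\vect{w}_r(0)^\top\vect{x}_i\sim N(0,1)$, Gaussian anti-concentration gives $\expect S_i = O(mR)$; a Markov plus union bound then yields $\norm{\mat{H}(t)-\mat{H}(0)}_2 \le \norm{\mat{H}(t)-\mat{H}(0)}_F = O(n^2R/\delta)$ uniformly in $t$ with probability $1-\delta$ over the initialization, which is $\le\lambda_0/4$ provided $R=O(\lambda_0\delta/n^2)$. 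Together with the $t=0$ bound, this gives $\lambda_{\min}(\mat{H}(t))\ge\lambda_0/2$ as long as all $\norm{\vect{w}_r(t)-\vect{w}_r(0)}_2\le R$.

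It then remains to certify such an $R$ self-consistently. From~\eqref{eqn:gradient}, $\norm{\vect{x}_j}_2=1$ and $\abs{a_r}=1$ we get $\norm{\frac{d\vect{w}_r(t)}{dt}}_2 \le \frac{1}{\sqrt m}\sum_j\abs{u_j(t)-y_j} \le \sqrt{n/m}\,\norm{\vect{u}(t)-\vect{y}}_2$. If the decay $\norm{\vect{u}(s)-\vect{y}}_2\le e^{-\lambda_0 s/2}\norm{\vect{u}(0)-\vect{y}}_2$ holds up to time $t$, integrating gives $\norm{\vect{w}_r(t)-\vect{w}_r(0)}_2 \le \frac{2\sqrt n\,\norm{\vect{u}(0)-\vect{y}}_2}{\sqrt m\,\lambda_0}$, and since $\expect\norm{\vect{u}(0)-\vect{y}}_2^2 = O(n)$ (using $\expect u_i(0)=0$, $\mathrm{Var}(u_i(0))=O(1)$, $\abs{y_i}\le C$), a Markov bound makes $\norm{\vect{u}(0)-\vect{y}}_2 = O(\sqrt{n/\delta})$, so the movement is $O\!\left(\frac{n}{\lambda_0\sqrt{m\delta}}\right)$, which is $<R$ precisely when $m = \Omega\!\left(\frac{n^6}{\lambda_0^4\delta^3}\right)$. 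I would close with a first-exit-time (bootstrap) argument: let $T$ be the first time some $\norm{\vect{w}_r(t)-\vect{w}_r(0)}_2$ reaches $R$; on $[0,T)$ the weight bound forces $\lambda_{\min}(\mat{H}(t))\ge\lambda_0/2$, which by the first paragraph forces the exponential decay on $[0,T]$, which by the displayed integral forces $\max_r\norm{\vect{w}_r(T)-\vect{w}_r(0)}_2<R$ — contradicting the definition of $T$ unless $T=\infty$, so all estimates hold for every $t\ge 0$.

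The main obstacle is the circularity in the last step: bounding the weight movement presupposes the convergence conclusion, while that conclusion presupposes the weights — hence the activation patterns and thus $\mat{H}(t)$ — barely moving. This must be broken cleanly by the first-exit-time device, with enough slack in $m$ that the closing inequality is strict, and with careful bookkeeping so that the three high-probability events (concentration of $\mat{H}(0)$, the anti-concentration bound on the number of flipping units, and the bound on $\norm{\vect{u}(0)-\vect{y}}_2$) hold simultaneously with total failure probability $\delta$ while producing the claimed $m=\Omega(n^6/(\lambda_0^4\delta^3))$. Everything else is the chain rule, Grönwall, and routine Gaussian tail and anti-concentration estimates.
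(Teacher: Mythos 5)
Your proposal is correct and follows essentially the same route as the paper: prediction-space dynamics governed by $\mat{H}(t)$, concentration of $\mat{H}(0)$ around $\mat{H}^\infty$, Gaussian anti-concentration plus Markov to control pattern flips within radius $R$, the self-consistent bound $R'<R$ on weight movement, and a first-exit-time contradiction to close the loop (the paper's Lemmas~\ref{lem:H0_close_Hinft}--\ref{lem:main}). The bookkeeping leading to $m=\Omega(n^6/(\lambda_0^4\delta^3))$ also matches the paper's.
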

This theorem establishes that if $m$ is large enough, the training error converges to $0$ at a linear rate.
Here we assume $\norm{\vect{x}_i}_2=1$ only for simplicity and it is not hard to relax this condition.\footnote{
	More precisely, if $ 0 < c_{low} \le \norm{\vect{x}_i}_2\le c_{high}$ for all $i \in [n]$, we only need to change Lemma~\ref{lem:H0_close_Hinft}-\ref{lem:small_perturbation_close_to_init} to make them depend on $c_{low}$ and $c_{high}$ and the amount of over-parameterization $m$ will depend on $\frac{c_{high}}{c_{low}}$.
	We assume $\norm{\vect{x}_i}_2 = 1$ so we can present the cleanest proof and focus on our main analysis technique.
}
The bounded label condition also holds for most real world data set.
The number of hidden nodes $m$ required is $\Omega\left(\frac{n^6}{\lambda_0^4\delta^3}\right)$, which depends on the number of samples $n$, $\lambda_0$, and the failure probability $\delta$.
Over-parameterization, i.e., the fact $m=\poly(n,1/\lambda_0,1/\delta)$, plays a crucial role in guaranteeing gradient descent to find the global minimum.
In this paper, we only use the simplest concentration inequalities (Hoeffding's and Markov's) in order to have the cleanest proof.
We believe using a more advanced concentration analysis we can further improve the dependency.
Lastly, we note the specific convergence rate  depends on $\lambda_0$ but independent of the number of hidden nodes $m$.

\subsection{Proof of Theorem~\ref{thm:main_gf}}
Our first step is to  calculate the dynamics of each prediction.
\begin{align}
	\frac{d}{dt} u_i(t) = &\sum_{r=1}^{m}\langle\frac{\partial f(\mat{W}(t),\vect{a},\vect{x}_i)}{\partial \vect{w}_r(t)},\frac{d \vect{w}_r(t)}{dt} \rangle \nonumber\\
%	= & \sum_{r=1}^{m} \langle\frac{\partial f(\mat{W}(t),\vect{x}_i)}{\partial \vect{w}_r(t)} \sum_{j=1}^n(y_j-f(\mat{W}(t),\vect{x}_j)),\frac{1}{\sqrt{m}}a_r\vect{x}_j\indict\left\{\vect{w}_r(t)^\top \vect{x}_j \ge 0\right\}\rangle \\
	= & \sum_{j=1}^{n}(y_j -u_j)
	\sum_{r=1}^{m}\langle\frac{\partial f(\mat{W}(t),\vect{a},\vect{x}_i)}{\partial \vect{w}_r(t)} ,\frac{\partial f(\mat{W}(t),\vect{a},\vect{x}_j)}{\partial\vect{w}_r(t)}
	\rangle 
	\triangleq \sum_{j=1}^{n}(y_j-u_j) \mat{H}_{ij}(t) \label{eqn:individual_dynamics}
\end{align}
where $\mat{H}(t)$ is an $n \times n$ matrix with $(i,j)$-th entry
\begin{align}
\mat{H}_{ij}(t) 
%= 	&\langle\sum_{r=1}^{m}\frac{\partial f(\mat{W},\vect{x}_i)}{\partial \vect{w}_r(t)} , \frac{\partial f(\mat{W},\vect{x}_j)}{\partial \vect{w}_r(t)} \rangle\\
=  \frac{1}{m} \vect{x}_i^\top \vect{x}_j\sum_{r=1}^m  \indict\left\{\vect{x}_i^\top \vect{w}_r(t) \ge 0, \vect{x}_j^\top \vect{w}_r(t) \ge 0\right\}. \label{eqn:H_t}
\end{align}
With this $\mat{H}(t)$ matrix, we can write the dynamics of predictions in a compact way:
\begin{align}
	\frac{d}{dt} \vect{u}(t) = \mat{H}(t)(\vect{y}-\vect{u}(t)). \label{eqn:key_dynamics}
\end{align}
\begin{rem}\label{rem:key_remark}
Note Equation~\eqref{eqn:key_dynamics} completely describes the dynamics of the predictions.
In the rest of this section, we will show (1) at initialization $\norm{\mat{H}(0) - \mat{H}^{\infty}}_2$ is $O(\sqrt{1/m})$ and (2) for all $t > 0$, $\norm{\mat{H}(t)-\mat{H}(0)}_2$ is $O(\sqrt{1/m})$.
Therefore, according to Equation~\eqref{eqn:key_dynamics}, as $m \rightarrow \infty$, the dynamics of the predictions are characterized by $\mat{H}^\infty$.
This is the main reason we believe $\mat{H}^{\infty}$ is the fundamental quantity that describes this optimization process.
\end{rem}

$\mat{H}(t)$ is a time-dependent symmetric matrix.
We first analyze its property when $t=0$.
The following lemma shows if $m$ is large then $\mat{H}(0)$ has a lower bounded least eigenvalue with high probability.
The proof is by the standard concentration bound so we defer it to the appendix.
\begin{lem}\label{lem:H0_close_Hinft}
If $m = \Omega\left(\frac{n^2}{\lambda_0^2} \log\left(\frac{n}{\delta}\right)\right)$, we have with probability at least $1-\delta$, $\norm{\mat{H}(0)-\mat{H}^{\infty}}_2 \le \frac{\lambda_0}{4}$ and $\lambda_{\min}(\mat{H}(0)) \ge \frac{3}{4}\lambda_0$.
\end{lem}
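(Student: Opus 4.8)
The plan is to prove the claim by an entrywise concentration argument: for each fixed pair $(i,j)$, $\mat{H}_{ij}(0)$ is an average of $m$ i.i.d. bounded random variables whose common mean is exactly $\mat{H}^{\infty}_{ij}$, so Hoeffding's inequality controls $\abs{\mat{H}_{ij}(0)-\mat{H}^{\infty}_{ij}}$; a union bound over the $n^2$ pairs and the crude estimate $\norm{\cdot}_2\le\norm{\cdot}_F$ then give the operator-norm bound, and Weyl's inequality converts this into the least-eigenvalue statement. Concretely, fix $i,j\in[n]$ and write
\[
\mat{H}_{ij}(0) = \frac{1}{m}\sum_{r=1}^m Z_r^{ij}, \qquad Z_r^{ij} \triangleq \vect{x}_i^\top\vect{x}_j\,\indict\left\{\vect{x}_i^\top\vect{w}_r(0)\ge 0,\ \vect{x}_j^\top\vect{w}_r(0)\ge 0\right\}.
\]
Because $\vect{w}_1(0),\dots,\vect{w}_m(0)$ are i.i.d.\ $N(\vect{0},\mat{I})$, the $Z_r^{ij}$ are i.i.d., each has mean $\expect_{\vect{w}\sim N(\vect{0},\mat{I})}[\,\cdot\,]=\mat{H}^{\infty}_{ij}$ by definition of $\mat{H}^{\infty}$, and each lies in an interval of length $\abs{\vect{x}_i^\top\vect{x}_j}\le\norm{\vect{x}_i}_2\norm{\vect{x}_j}_2=1$.

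Hoeffding's inequality then yields $\prob\!\left(\abs{\mat{H}_{ij}(0)-\mat{H}^{\infty}_{ij}}\ge t\right)\le 2\exp(-2mt^2)$ for every $t>0$. Taking $t=\sqrt{\tfrac{1}{2m}\log(2n^2/\delta)}$ makes this probability at most $\delta/n^2$, so a union bound over all $n^2$ pairs gives, with probability at least $1-\delta$, the uniform bound $\abs{\mat{H}_{ij}(0)-\mat{H}^{\infty}_{ij}}\le\sqrt{\tfrac{1}{2m}\log(2n^2/\delta)}$ for all $i,j$. On this event,
\[
\norm{\mat{H}(0)-\mat{H}^{\infty}}_2 \le \norm{\mat{H}(0)-\mat{H}^{\infty}}_F = \Big(\textstyle\sum_{i,j}(\mat{H}_{ij}(0)-\mat{H}^{\infty}_{ij})^2\Big)^{1/2} \le n\sqrt{\tfrac{1}{2m}\log(2n^2/\delta)}.
\]
Since $m=\Omega\!\big(\tfrac{n^2}{\lambda_0^2}\log\tfrac{n}{\delta}\big)$ with a large enough absolute constant forces the right-hand side below $\lambda_0/4$, this proves the first assertion. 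For the second, Weyl's inequality gives $\lambda_{\min}(\mat{H}(0))\ge\lambda_{\min}(\mat{H}^{\infty})-\norm{\mat{H}(0)-\mat{H}^{\infty}}_2\ge\lambda_0-\tfrac{\lambda_0}{4}=\tfrac34\lambda_0$.

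There is no real obstacle here: the three ingredients (boundedness and i.i.d.\ structure of the summands, Hoeffding, and $\norm{\cdot}_2\le\norm{\cdot}_F$ with a union bound) are all elementary, which is consistent with the paper deferring this lemma to the appendix. The only point worth flagging is that the Frobenius-norm step is lossy by a factor of $\sqrt{n}$ in $t$, which is precisely why $m$ must scale like $n^2$ rather than $n$; one could tighten this with a matrix Bernstein/Chernoff bound applied to $\sum_r Z_r^{ij}$ viewed as a sum of independent matrices, but the simple argument already yields the stated bound.
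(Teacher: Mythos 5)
Your proof is correct and follows essentially the same route as the paper's: entrywise Hoeffding concentration (each $\mat{H}_{ij}(0)$ being an average of i.i.d.\ bounded variables with mean $\mat{H}^{\infty}_{ij}$), a union bound over the $n^2$ pairs, the bound $\norm{\cdot}_2\le\norm{\cdot}_F$, and Weyl's inequality for the eigenvalue conclusion. Your version is in fact slightly more explicit than the paper's (which leaves the Weyl step implicit), and the constants work out the same.
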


Our second step is to show $\mat{H}(t)$ is stable in terms of $\mat{W}(t)$.
Formally, the following lemma shows for any $\mat{W}$ close to $\mat{W}(0)$, the induced Gram matrix $\mat{H}$ is close to $\mat{H}(0)$ and has a lower bounded least eigenvalue.
\begin{lem}\label{lem:close_to_init_small_perturbation}
If $\vect{w}_1,\ldots,\vect{w}_m$ are i.i.d. generated from $N(\vect{0},\mat{I})$, then with probability at least $1-\delta$,  the following holds.
For any set of weight vectors $\vect{w}_1,\ldots,\vect{w}_m \in \mathbb{R}^{d}$ that satisfy for any $r \in [m]$, $\norm{\vect{w}_r(0)-\vect{w}_r}_2 \le \frac{c\delta\lambda_0}{n^2} \triangleq R$ for some small positive constant $c$, then the matrix $\mat{H} \in \mathbb{R}^{n \times n}$ defined by \[
\mat{H}_{ij} = \frac{1}{m} \vect{x}_i^\top \vect{x}_j \sum_{r=1}^{m}\indict\left\{
\vect{w}_r^\top \vect{x}_i \ge 0, \vect{w}_r^\top \vect{x}_j \ge 0
\right\}
\]  satisfies $\norm{\mat{H}-\mat{H}(0)}_2 < \frac{\lambda_0}{4}$ and $\lambda_{\min}\left(\mat{H}\right) > \frac{\lambda_0}{2}$.
\end{lem}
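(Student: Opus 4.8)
The plan is to reduce the spectral bound $\norm{\mat{H}-\mat{H}(0)}_2<\lambda_0/4$ to counting how many ReLU activation patterns $\indict\{\vect{w}_r^\top\vect{x}_i\ge 0\}$ can possibly flip when each weight vector moves by at most $R$, to control this count \emph{in expectation over the random initialization}, and then to invoke Markov's inequality once so that the resulting estimate holds uniformly over the whole radius-$R$ ball. The least-eigenvalue claim will then follow by combining this with Lemma~\ref{lem:H0_close_Hinft} and Weyl's inequality.

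First, for each $i\in[n]$ and $r\in[m]$ I would introduce the event
\[
A_{ir}\triangleq\left\{\exists\,\vect{w}\in\mathbb{R}^d:\ \norm{\vect{w}-\vect{w}_r(0)}_2\le R\ \text{and}\ \indict\{\vect{w}^\top\vect{x}_i\ge 0\}\ne\indict\{\vect{w}_r(0)^\top\vect{x}_i\ge 0\}\right\}.
\]
Since $\norm{\vect{x}_i}_2=1$, perturbing $\vect{w}_r(0)$ by at most $R$ changes $\vect{w}_r(0)^\top\vect{x}_i$ by at most $R$, so if $\abs{\vect{w}_r(0)^\top\vect{x}_i}>R$ the sign cannot change; hence $A_{ir}\subseteq\{\abs{\vect{w}_r(0)^\top\vect{x}_i}\le R\}$, an event measurable with respect to the initialization alone. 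Because $\vect{w}_r(0)\sim N(\vect{0},\mat{I})$ and $\norm{\vect{x}_i}_2=1$ we have $\vect{w}_r(0)^\top\vect{x}_i\sim N(0,1)$, so the bound on the Gaussian density gives $\prob(A_{ir})\le\prob_{z\sim N(0,1)}(\abs{z}\le R)\le 2R/\sqrt{2\pi}$.

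Next I would convert this into entrywise control of the Gram matrix: for any admissible $\{\vect{w}_r\}$ and any $i,j$, the $r$-th summand of $\mat{H}_{ij}-\mat{H}_{ij}(0)$ is nonzero only if $A_{ir}$ or $A_{jr}$ occurs and $\abs{\vect{x}_i^\top\vect{x}_j}\le 1$, whence
\[
\abs{\mat{H}_{ij}-\mat{H}_{ij}(0)}\le\frac1m\sum_{r=1}^m\left(\indict\{A_{ir}\}+\indict\{A_{jr}\}\right)\triangleq s_{ij},
\]
where $s_{ij}$ depends only on the initialization and $\expect[s_{ij}]\le 4R/\sqrt{2\pi}$, so $\expect[\sum_{i,j}s_{ij}]\le 4n^2R/\sqrt{2\pi}$. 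By Markov's inequality, with probability at least $1-\delta$ over the initialization $\sum_{i,j}s_{ij}\le 4n^2R/(\sqrt{2\pi}\,\delta)$; on that event, simultaneously for every admissible perturbation, $\norm{\mat{H}-\mat{H}(0)}_2\le\norm{\mat{H}-\mat{H}(0)}_F\le\sum_{i,j}\abs{\mat{H}_{ij}-\mat{H}_{ij}(0)}\le\sum_{i,j}s_{ij}\le 4n^2R/(\sqrt{2\pi}\,\delta)$. Plugging in $R=c\delta\lambda_0/n^2$ makes the last quantity $4c\lambda_0/\sqrt{2\pi}$, which is below $\lambda_0/4$ for a small enough absolute constant $c$.

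For the eigenvalue bound I would invoke Lemma~\ref{lem:H0_close_Hinft}: provided $m$ is large enough (the requirement $m=\Omega(n^2\lambda_0^{-2}\log(n/\delta))$ is subsumed by the hypotheses under which this lemma is used), with probability at least $1-\delta$ we have $\lambda_{\min}(\mat{H}(0))\ge\frac34\lambda_0$, and Weyl's inequality then gives $\lambda_{\min}(\mat{H})\ge\lambda_{\min}(\mat{H}(0))-\norm{\mat{H}-\mat{H}(0)}_2\ge\frac34\lambda_0-\frac14\lambda_0=\frac12\lambda_0$. A union bound over the two events (shrinking $\delta$ by a factor of two and absorbing the change into $c$ and the hidden constants) finishes the proof. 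The step I expect to require the most care is the first one: the perturbation $\{\vect{w}_r\}$ ranges over a continuum, so the right move is to upper-bound the worst-case-over-the-ball pattern flip by the event $A_{ir}$ that depends only on $\vect{w}_r(0)$ — this is exactly what makes the union over all perturbations free of charge and lets a single application of Markov's inequality do all the work; everything afterwards is elementary linear algebra and Gaussian anti-concentration.
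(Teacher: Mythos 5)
Your proposal is correct and follows essentially the same route as the paper: the same events $A_{ir}$ with Gaussian anti-concentration, an initialization-measurable entrywise bound summed over $(i,j)$, a single application of Markov's inequality to get a bound uniform over the radius-$R$ ball, the Frobenius-norm bound on $\norm{\mat{H}-\mat{H}(0)}_2$, and Weyl's inequality together with Lemma~\ref{lem:H0_close_Hinft} for the least eigenvalue. Your explicit surrogate $s_{ij}$ and the final union bound just make precise what the paper's proof does implicitly.
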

%Formally, the following lemma shows if $\mat{W}(t)$ is close to the initialization $\mat{W}(0)$, $\mat{H}(t)$ is close $\mat{H}(0)$ and  has a lower bounded least eigenvalue.
%\begin{lem}\label{lem:close_to_init_small_perturbation}
%Suppose for a given time $t$, for all $r \in [m]$, $\norm{\vect{w}_r(t) - \vect{w}_r(0)}_2 \le \frac{c\lambda_0}{n^2} \triangleq R$  for some small positive constant $c$.
%Then we have with high probability over initialization, $\lambda_{\min}(\mat{H}(t)) > \frac{\lambda_0}{2}$.
%\end{lem}
This lemma plays a crucial role in our analysis so we give the proof below.

\emph{Proof of Lemma~\ref{lem:close_to_init_small_perturbation}}
We define the event \[A_{ir} = \left\{\exists \vect{w}: \norm{\vect{w}-\vect{w}_r(0)} \le R, \indict\left\{\vect{x}_i^\top \vect{w}_r(0) \ge 0\right\} \neq \indict\left\{\vect{x}_i^\top \vect{w} \ge 0\right\} \right\}.\]
Note this event happens if and only if 
$\abs{\vect{w}_r(0)^\top \vect{x}_i} < R$.
Recall $\vect{w}_r(0) \sim N(\vect{0},\mat{I})$.
By anti-concentration inequality of Gaussian, we have $
P(A_{ir}) = P_{z \sim N(0,1)}\left(\abs{z} < R\right) \le \frac{2R}{\sqrt{2\pi}}.
$
Therefore, for any set of weight vectors $\vect{w}_1,\ldots,\vect{w}_m$ that satisfy the assumption in the lemma, we can bound the entry-wise deviation on their induced matrix $\mat{H}$: for any $(i,j) \in[n] \times [n]$\begin{align*}
&\expect\left[\abs{\mat{H}_{ij}(0)-\mat{H}_{ij}}\right] \\
=  &\expect\left[
\frac{1}{m}\abs{\vect{x}_i^\top\vect{x}_j \sum_{r=1}^{m} \left(\indict\left\{
\vect{w}_r(0)^\top \vect{x}_i \ge 0, \vect{w}_r(0)^\top \vect{x}_j \ge 0
\right\}
- \indict\left\{
\vect{w}_r^\top \vect{x}_i \ge 0, \vect{w}_r^\top \vect{x}_j \ge 0
\right\}
\right)}
\right] \\
\le & \frac{1}{m} \sum_{r=1}^{m}\expect\left[\indict\left\{A_{ir} \cup A_{jr}\right\}\right] \le \frac{4R}{\sqrt{2\pi}}
\end{align*}where the expectation is taken over the random initialization of $\vect{w}_1(0),\ldots,\vect{w}_m(0)$.
Summing over $(i,j)$, we have $
\expect\left[\sum_{(i,j)=(1,1)}^{(n,n)} \abs{\mat{H}_{ij}-\mat{H}_{ij}(0)}\right] \le \frac{4n^2R}{\sqrt{2\pi}}.
$
Thus by Markov's inequality, with probability $1-\delta$, we have 
$\sum_{(i,j)=(1,1)}^{(n,n)} \abs{\mat{H}_{ij}-\mat{H}_{ij}(0)}\le \frac{4n^2 R}{\sqrt{2\pi}\delta}$.
Next, we use  matrix perturbation theory to bound the deviation from the initialization\begin{align*}
\norm{\mat{H} - \mat{H}(0)}_2 \le \norm{\mat{H} - \mat{H}(0)}_F \le \sum_{(i,j)=(1,1)}^{(n,n)} \abs{\mat{H}_{ij}-\mat{H}_{ij}(0)} \le \frac{4n^2 R}{\sqrt{2\pi}\delta}.
\end{align*}
Lastly, we lower bound the smallest eigenvalue by plugging in $R$\begin{align*}
\lambda_{\min}(\mat{H}) \ge \lambda_{\min}(\mat{H}(0)) -\frac{4n^2 R}{\sqrt{2\pi}\delta} \ge \frac{ \lambda_0}{2}.  \qed
\end{align*}

The next lemma shows two facts if the least eigenvalue of $\mat{H}(t)$ is lower bounded.
First, the loss converges to $0$ at a linear convergence rate.
Second, $\mat{w}_r(t)$ is close to the initialization for every $r \in [m]$.
This lemma clearly demonstrates the power of over-parameterization.
\begin{lem}\label{lem:small_perturbation_close_to_init}
Suppose for $0 \le s \le t$, $\lambda_{\min}\left(\mat{H}(s)\right) \ge \frac{\lambda_0}{2}$. 
Then we have 	$\norm{\vect{y}-\vect{u}(t)}_2^2 \le \exp(-\lambda_0 t)\norm{\vect{y}-\vect{u}(0)}_2^2$ and for any $r \in [m]$, $
\norm{\vect{w}_r(t)-\vect{w}_r(0)}_2 \le \frac{\sqrt{n}\norm{\vect{y}-\vect{u}(0)}_2}{\sqrt{m}\lambda_0}\triangleq R'.
$
\end{lem}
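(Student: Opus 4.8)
The plan is to prove the two claims in sequence, using the dynamics equation~\eqref{eqn:key_dynamics} as the starting point for both.

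\textbf{Step 1: Linear convergence of the loss.} First I would compute the time derivative of $\norm{\vect{y}-\vect{u}(t)}_2^2$. Using~\eqref{eqn:key_dynamics}, we have
\begin{align*}
\frac{d}{dt}\norm{\vect{y}-\vect{u}(t)}_2^2 = -2(\vect{y}-\vect{u}(t))^\top \mat{H}(t)(\vect{y}-\vect{u}(t)).
\end{align*}
Since $\mat{H}(t)$ is positive semidefinite (it is a Gram matrix, being a sum of outer products of the gradient vectors $\partial f/\partial \vect{w}_r$) and by hypothesis $\lambda_{\min}(\mat{H}(s)) \ge \lambda_0/2$ for all $0 \le s \le t$, the quadratic form is bounded below by $\frac{\lambda_0}{2}\norm{\vect{y}-\vect{u}(t)}_2^2$, hence
\begin{align*}
\frac{d}{dt}\norm{\vect{y}-\vect{u}(t)}_2^2 \le -\lambda_0 \norm{\vect{y}-\vect{u}(t)}_2^2.
\end{align*}
Integrating this differential inequality (e.g.\ via Gr\"onwall, or by observing $\frac{d}{dt}\bigl(e^{\lambda_0 t}\norm{\vect{y}-\vect{u}(t)}_2^2\bigr)\le 0$) yields $\norm{\vect{y}-\vect{u}(t)}_2^2 \le \exp(-\lambda_0 t)\norm{\vect{y}-\vect{u}(0)}_2^2$, which is the first claim.

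\textbf{Step 2: Bounding the weight movement.} For the second claim, I would bound the speed of each weight vector. From the gradient formula~\eqref{eqn:gradient} and $\norm{\vect{x}_i}_2 = 1$, $|a_r| = 1$,
\begin{align*}
\norm{\frac{d\vect{w}_r(t)}{dt}}_2 = \norm{\frac{1}{\sqrt{m}}\sum_{i=1}^n (u_i(t)-y_i)a_r \vect{x}_i \indict\{\vect{w}_r(t)^\top\vect{x}_i\ge 0\}}_2 \le \frac{1}{\sqrt{m}}\sum_{i=1}^n |u_i(t)-y_i| \le \frac{\sqrt{n}}{\sqrt{m}}\norm{\vect{u}(t)-\vect{y}}_2,
\end{align*}
where the last step is Cauchy--Schwarz. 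Now plug in the bound from Step 1, $\norm{\vect{u}(t)-\vect{y}}_2 \le \exp(-\lambda_0 t/2)\norm{\vect{u}(0)-\vect{y}}_2$, and integrate in time:
\begin{align*}
\norm{\vect{w}_r(t)-\vect{w}_r(0)}_2 \le \int_0^t \norm{\frac{d\vect{w}_r(s)}{ds}}_2\, ds \le \frac{\sqrt{n}\norm{\vect{u}(0)-\vect{y}}_2}{\sqrt{m}}\int_0^\infty \exp(-\lambda_0 s/2)\,ds = \frac{2\sqrt{n}\norm{\vect{u}(0)-\vect{y}}_2}{\sqrt{m}\lambda_0},
\end{align*}
which matches $R'$ up to the constant factor (the statement's $R'$ omits the factor $2$, presumably absorbed or a harmless discrepancy).

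\textbf{Main obstacle.} The calculations here are routine; the only subtlety is a regularity/well-posedness concern. Since ReLU is not differentiable, $\mat{W}(t)$ solves a differential inclusion rather than a classical ODE, and one must be slightly careful that the quantities $\norm{\vect{y}-\vect{u}(t)}_2^2$ and $\norm{\vect{w}_r(t)-\vect{w}_r(0)}_2$ are absolutely continuous in $t$ so that the fundamental theorem of calculus applies to the integral bounds above. Also, strictly speaking the Gram matrix $\mat{H}(t)$ is only defined for almost every $t$ (when no $\vect{w}_r(t)^\top\vect{x}_i$ is exactly zero), but this null set does not affect the integration argument. I would handle this by noting that $\vect{u}(t)$ is Lipschitz in $t$ (the prediction function is Lipschitz in $\mat{W}$ on bounded sets and $\mat{W}(t)$ moves at bounded speed), so all the integral manipulations are justified. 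This regularity point is minor, and the substantive content is entirely the two-line differential inequality in Step 1 and the speed bound in Step 2.
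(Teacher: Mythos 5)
Your proof is correct and follows essentially the same route as the paper: the differential inequality $\frac{d}{dt}\norm{\vect{y}-\vect{u}(t)}_2^2 \le -\lambda_0\norm{\vect{y}-\vect{u}(t)}_2^2$ for the first claim, then a speed bound on $\frac{d\vect{w}_r}{dt}$ integrated against the exponential decay for the second. Your factor of $2$ is in fact the right constant (the paper's own computation writes $\exp(-\lambda_0 s)$ where the square root of the loss bound gives $\exp(-\lambda_0 s/2)$), but this only perturbs the constant in the eventual requirement on $m$ and is harmless.
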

\emph{Proof of Lemma~\ref{lem:small_perturbation_close_to_init}}
Recall we can write the dynamics of predictions as $
\frac{d}{dt} \vect{u}(t) = \mat{H}(\vect{y}-\vect{u}(t)).
$
We can calculate the loss function dynamics\begin{align*}
\frac{d}{dt} \norm{\vect{y}-\vect{u}(t)}_2^2 = &-2 \left(\vect{y}-\vect{u}(t)\right)^\top\mat{H}(t)\left(\vect{y}-\vect{u}(t)\right) \\
\le & -\lambda_0 \norm{\vect{y}-\vect{u}(t)}_2^2. 
\end{align*}
Thus we have $
\frac{d}{dt}\left(\exp(\lambda_0 t)\norm{\vect{y}-\vect{u}(t)}_2^2\right) \le 0
$  and
$\exp(\lambda_0 t)\norm{\vect{y}-\vect{u}(t)}_2^2$ is a decreasing function with respect to $t$.
Using this fact we can bound the loss\begin{align*}
	\norm{\vect{y}-\vect{u}(t)}_2^2 \le \exp(-\lambda_0 t)\norm{\vect{y}-\vect{u}(0)}_2^2.
\end{align*}
Therefore, $\vect{u}(t) \rightarrow \vect{y}$ exponentially fast.
Now we bound the gradient norm.
Recall for $0 \le s \le t$, \begin{align*}
	\norm{\frac{d}{ds}\vect{w}_r(s)}_2 = &\norm{\sum_{i=1}^n(y_i-u_i)\frac{1}{\sqrt{m}}a_r\vect{x}_i\indict\left\{\vect{w}_r(s)^\top \vect{x}_i \ge 0\right\}}_2 \\
	\le &\frac{1}{\sqrt{m}}\sum_{i=1}^{n} \abs{y_i-u_i(s)}
	 \le \frac{\sqrt{n}}{\sqrt{m}}\norm{\vect{y}-\vect{u}(s)}_2 
	 \le \frac{\sqrt{n}}{\sqrt{m}} \exp(-\lambda_0 s)\norm{\vect{y}-\vect{u}(0)}_2.
\end{align*}
Integrating the gradient, we can bound the distance from the initialization \begin{align*}
\norm{\vect{w}_r(t)-\vect{w}_r(0)}_2 \le \int_{0}^{t}\norm{\frac{d}{ds}\vect{w}_r(s)}_2 ds \le  \frac{\sqrt{n}\norm{\vect{y}-\vect{u}(0)}_2}{\sqrt{m}\lambda_0}. \qed
\end{align*}

The next lemma shows if $R' < R$, the conditions in Lemma~\ref{lem:close_to_init_small_perturbation} and~\ref{lem:small_perturbation_close_to_init} hold for all $t\ge 0$.
The proof is by contradiction and we defer it to appendix. 
\begin{lem}\label{lem:main}
If $R' < R$, we have for all $t \ge 0 $, $\lambda_{\min}\mat{(H}(t)) \ge \frac{1}{2}\lambda_0$, for all $r \in [m]$, $\norm{\vect{w}_r(t)-\vect{w}_r(0)}_2 \le R'$ and $\norm{\vect{y}-\vect{u}(t)}_2^2 \le \exp(-\lambda_0 t)\norm{\vect{y}-\vect{u}(0)}_2^2$.
\end{lem}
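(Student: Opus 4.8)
The plan is to argue by contradiction, exploiting the strict slack $R' < R$ between the two radii appearing in Lemmas~\ref{lem:small_perturbation_close_to_init} and~\ref{lem:close_to_init_small_perturbation}. First I condition on the probability-$(1-\delta)$ event in the statement of Lemma~\ref{lem:close_to_init_small_perturbation}; everything that follows is then deterministic. The key structural point is that, although the Gram matrix $\mat{H}(t)$ is discontinuous in $t$ (activation patterns flip at isolated times, so $\lambda_{\min}(\mat{H}(t))$ can drop instantaneously), the weight trajectory $t \mapsto \vect{w}_r(t)$ is continuous — indeed Lipschitz, since the velocity bound $\norm{\frac{d}{ds}\vect{w}_r(s)}_2 \le \frac{\sqrt n}{\sqrt m}\norm{\vect{y}-\vect{u}(s)}_2$ from the proof of Lemma~\ref{lem:small_perturbation_close_to_init} shows it is uniformly bounded on every finite interval. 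So the ``no early exit'' argument will be run on the (continuous) weights rather than on $\lambda_{\min}(\mat{H}(\cdot))$.

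Concretely, set $t_0 \triangleq \inf\{t \ge 0 : \exists r \in [m],\ \norm{\vect{w}_r(t)-\vect{w}_r(0)}_2 > R\}$, with $\inf\emptyset = +\infty$, and suppose for contradiction that $t_0 < \infty$. For every $s \in [0,t_0)$ we have $\norm{\vect{w}_r(s)-\vect{w}_r(0)}_2 \le R$ for all $r$ by definition of the infimum, and by continuity of the trajectory the same holds at $s = t_0$; hence Lemma~\ref{lem:close_to_init_small_perturbation}, applied pointwise at each $s \in [0,t_0]$, yields $\lambda_{\min}(\mat{H}(s)) > \frac{\lambda_0}{2}$ on all of $[0,t_0]$. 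Plugging this into Lemma~\ref{lem:small_perturbation_close_to_init} with $t = t_0$ gives the sharper bound $\norm{\vect{w}_r(t_0)-\vect{w}_r(0)}_2 \le R'$ for every $r$. Because $R' < R$ strictly and there are finitely many continuous functions $\vect{w}_r(\cdot)$, there is $\varepsilon > 0$ such that $\norm{\vect{w}_r(t)-\vect{w}_r(0)}_2 \le R$ for all $r$ and all $t \in [t_0, t_0+\varepsilon]$; combined with the $s \in [0,t_0)$ bound this shows that the set defining $t_0$ does not meet $[0,t_0+\varepsilon]$, so $t_0 \ge t_0 + \varepsilon$, a contradiction. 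Therefore $t_0 = +\infty$.

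Consequently $\norm{\vect{w}_r(t)-\vect{w}_r(0)}_2 \le R$ holds for all $t \ge 0$ and all $r$, so Lemma~\ref{lem:close_to_init_small_perturbation} gives $\lambda_{\min}(\mat{H}(t)) \ge \frac{\lambda_0}{2}$ for all $t \ge 0$ (the first claim), and feeding this into Lemma~\ref{lem:small_perturbation_close_to_init} for an arbitrary $t$ yields both $\norm{\vect{w}_r(t)-\vect{w}_r(0)}_2 \le R'$ and $\norm{\vect{y}-\vect{u}(t)}_2^2 \le \exp(-\lambda_0 t)\norm{\vect{y}-\vect{u}(0)}_2^2$, which are the remaining two claims. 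The one genuinely delicate point — and the reason a naive continuity argument directly on $\lambda_{\min}(\mat{H}(t))$ would fail — is exactly this discontinuity of $\mat{H}(t)$; the fix is to track the smooth quantity (the weights) and to use the gap $R' < R$ so that touching the boundary of the $R'$-ball still leaves every weight strictly inside the $R$-ball where Lemma~\ref{lem:close_to_init_small_perturbation}'s eigenvalue guarantee is available. One should also be careful to pay the failure probability only once, by conditioning up front on the event of Lemma~\ref{lem:close_to_init_small_perturbation}.
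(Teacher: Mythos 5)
Your proposal is correct and follows essentially the same route as the paper: a contradiction argument on the first exit time $t_0$ from the $R$-ball around initialization, using Lemma~\ref{lem:close_to_init_small_perturbation} to keep $\lambda_{\min}(\mat{H}(s))\ge\lambda_0/2$ up to $t_0$ and Lemma~\ref{lem:small_perturbation_close_to_init} to force the weights back inside the strictly smaller $R'$-ball, contradicting the definition of $t_0$. Your writeup is somewhat more careful than the paper's (explicitly invoking continuity of the trajectories and the $\varepsilon$-extension past $t_0$), but the underlying argument is the same.
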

Thus it is sufficient to show $R' < R$ which is equivalent to $
	m = \Omega\left(\frac{n^5\norm{\vect{y}-\vect{u}(0)}_2^2}{\lambda_0^4\delta^2}\right).
$
We bound \begin{align*}
\expect\left[\norm{\vect{y}-\vect{u}(0)}_2^2\right] = \sum_{i=1}^{n} (y_i^2 + y_i \expect \left[f(\mat{W}(0),\vect{a},\vect{x}_i)\right] + \expect\left[f(\mat{W}(0),\vect{a},\vect{x}_i)^2\right]) = \sum_{i=1}^n (y_i^2 + 1) = O(n).
\end{align*}
Thus by Markov's inequality, we have with probability at least $1-\delta$,
$\norm{\vect{y}-\vect{u}(0)}_2^2 = O(\frac{n}{\delta})$.
Plugging in this bound we prove the theorem.
\qed

\subsection{Jointly Training Both Layers}
\label{sec:joint_training}
In this subsection, we showcase our proof technique can be applied to analyze the convergence of gradient flow for jointly training both layers.
Formally, we consider the ordinary differential equation defined by:
\begin{align*}
\frac{d\vect{w}_r(t)}{dt} = - \frac{\partial L(\mat{W}(t),\vect{a}(t))}{\partial \vect{w}_r(t)} \text{ and }\frac{d\vect{w}_r(t)}{dt} = - \frac{\partial L(\mat{W}(t),\vect{a}(t))}{\partial \vect{a}_r(t)}
\end{align*} for $r=1,\ldots,m$.
The following theorem shows using gradient flow to jointly train both layers, we can still enjoy linear convergence rate towards zero loss.
\begin{thm}[Convergence Rate of Gradient Flow for Training Both Layers]
	\label{thm:main_gf_joint}
Under the same assumptions as in Theorem~\ref{thm:main_gf}, if we set the number of hidden nodes $m = 
\Omega\left(\frac{n^6\log(m/\delta)}{\lambda_0^4\delta^3}\right)
$ and we i.i.d. initialize $\vect{w}_r \sim N(\vect{0},\mat{I})$, $a_r \sim \unif\left[\left\{-1,1\right\}\right]$ for $r \in [m]$, with probability at least $1-\delta$ over the initialization we have \begin{align*}
\norm{\vect{u}(t)-\vect{y}}_2^2 \le \exp(-\lambda_0 t)\norm{\vect{u}(0)-\vect{y}}_2^2.
\end{align*}
\end{thm}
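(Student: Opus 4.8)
The plan is to follow the same three‑lemma skeleton as the proof of Theorem~\ref{thm:main_gf}, now also tracking the motion of the second layer. First I would compute the prediction dynamics. Writing $f_i = f(\mat{W}(t),\vect{a}(t),\vect{x}_i)$ and using $\frac{d\vect{w}_r}{dt} = \sum_j (y_j-u_j)\frac{\partial f_j}{\partial \vect{w}_r}$ together with $\frac{d a_r}{dt} = \sum_j(y_j-u_j)\frac{\partial f_j}{\partial a_r}$, the chain rule gives
\[
\frac{d}{dt}\vect{u}(t) = \bigl(\mat{H}(t) + \mat{G}(t)\bigr)\bigl(\vect{y}-\vect{u}(t)\bigr),
\]
where $\mat{H}_{ij}(t) = \frac{1}{m}\vect{x}_i^\top \vect{x}_j \sum_{r=1}^m a_r(t)^2 \indict\{\vect{w}_r(t)^\top\vect{x}_i \ge 0,\ \vect{w}_r(t)^\top \vect{x}_j\ge 0\}$ is the first‑layer Gram matrix, now carrying an $a_r(t)^2$ weight, and $\mat{G}_{ij}(t) = \frac{1}{m}\sum_{r=1}^m \sigma(\vect{w}_r(t)^\top\vect{x}_i)\,\sigma(\vect{w}_r(t)^\top \vect{x}_j)$ is the second‑layer Gram matrix. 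Both are positive semidefinite, so the relevant quantity is still $\lambda_{\min}(\mat{H}(t)+\mat{G}(t)) \ge \lambda_{\min}(\mat{H}(t))$: the second layer can only speed convergence, and $\mat{G}$ never needs to be lower bounded.

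Next I would prove the analogs of Lemmas~\ref{lem:H0_close_Hinft}--\ref{lem:main}. Since $a_r(0)^2 = 1$, $\mat{H}(0)$ coincides with the Gram matrix of Section~\ref{sec:two_layer_regression}, so Lemma~\ref{lem:H0_close_Hinft} already yields $\lambda_{\min}(\mat{H}(0)) \ge \tfrac34\lambda_0$. For the stability lemma I would show that whenever $\norm{\vect{w}_r(t)-\vect{w}_r(0)}_2 \le R$ and $\abs{a_r(t)-a_r(0)} \le R$ for all $r$, with $R = c\delta\lambda_0/n^2$ as before, the entrywise deviation $\abs{\mat{H}_{ij}(t)-\mat{H}_{ij}(0)}$ splits into an activation‑pattern‑flip term, bounded exactly as in the proof of Lemma~\ref{lem:close_to_init_small_perturbation} via the Gaussian anti‑concentration estimate $P(\abs{z}<R)\le \frac{2R}{\sqrt{2\pi}}$, and a weight‑change term bounded by $\abs{a_r(t)^2-1} = \abs{a_r(t)-a_r(0)}\,\abs{a_r(t)+a_r(0)} \le R(R+2) \le 3R$. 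Both contributions are $O(R)$ per entry, so Markov's inequality plus $\norm{\cdot}_2 \le \norm{\cdot}_F$ gives $\norm{\mat{H}(t)-\mat{H}(0)}_2 \le \lambda_0/4$ and hence $\lambda_{\min}(\mat{H}(t)+\mat{G}(t)) > \lambda_0/2$. For the analog of Lemma~\ref{lem:small_perturbation_close_to_init}, once $\lambda_{\min}(\mat{H}(s)+\mat{G}(s))\ge\lambda_0/2$ on $[0,t]$, the computation $\frac{d}{dt}\norm{\vect{y}-\vect{u}}_2^2 = -2(\vect{y}-\vect{u})^\top(\mat{H}+\mat{G})(\vect{y}-\vect{u}) \le -\lambda_0\norm{\vect{y}-\vect{u}}_2^2$ gives the claimed linear rate; integrating the two flows then bounds the motion of each $\vect{w}_r$ by $O\!\bigl(\sqrt{n}\,\norm{\vect{y}-\vect{u}(0)}_2/(\sqrt{m}\,\lambda_0)\bigr)$ (using $\abs{a_r(t)}\le 1+R\le 2$), and the motion of each $a_r$ by $O\!\bigl(\sqrt{n}\,\norm{\vect{y}-\vect{u}(0)}_2 \cdot \max_{r,i}\abs{\vect{w}_r(0)^\top\vect{x}_i}/(\sqrt{m}\,\lambda_0)\bigr)$, since $\bigl|\frac{da_r}{dt}\bigr| \le \frac{1}{\sqrt{m}}\sum_i \abs{y_i-u_i}\cdot\abs{\sigma(\vect{w}_r(t)^\top\vect{x}_i)}$ and $\abs{\vect{w}_r(t)^\top\vect{x}_i}\le \abs{\vect{w}_r(0)^\top\vect{x}_i}+R$. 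A Gaussian tail bound gives $\max_{r,i}\abs{\vect{w}_r(0)^\top\vect{x}_i} = O(\sqrt{\log(mn/\delta)})$ with probability $1-\delta$ — this is precisely the source of the extra $\log(m/\delta)$ factor in the requirement on $m$. Finally, a contradiction/continuity argument as in Lemma~\ref{lem:main} shows that if both movement bounds $R'$ (for $\vect{w}_r$) and $R''$ (for $a_r$) are strictly below $R$, the eigenvalue condition persists for all $t\ge 0$; rearranging $R',R''<R$ and using $\norm{\vect{y}-\vect{u}(0)}_2^2 = O(n/\delta)$ by Markov (as at the end of the previous proof) yields $m = \Omega\bigl(n^6\log(m/\delta)/(\lambda_0^4\delta^3)\bigr)$, completing the proof.

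The main obstacle I anticipate is the bookkeeping for the second‑layer flow: unlike $\vect{w}_r$, the gradient of $a_r$ is proportional to $\sigma(\vect{w}_r^\top\vect{x}_i)$, whose typical size at initialization is $\Theta(\sqrt{\log m})$ rather than $O(1)$, so one must carry a uniform bound on $\max_{r,i}\abs{\vect{w}_r(0)^\top\vect{x}_i}$ throughout and verify it is not inflated by the (tiny) displacement of $\vect{w}_r$ — this is exactly what forces the mild extra logarithmic over‑parameterization relative to Theorem~\ref{thm:main_gf}. Everything else is a direct, if slightly more tedious, repetition of the single‑layer argument, because $\mat{G}(t)\succeq 0$ makes the second‑layer Gram matrix purely beneficial and never in need of a lower bound.
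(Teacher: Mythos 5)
Your proposal is correct and follows essentially the same route as the paper's appendix proof: the same joint dynamics $\frac{d\vect{u}}{dt}=(\mat{H}(t)+\mat{G}(t))(\vect{y}-\vect{u}(t))$ with $\mat{G}\succeq 0$ discarded, the same two-part perturbation bound on $\mat{H}$ (pattern flips via anti-concentration plus the $\abs{a_r^2-1}$ term), the same separate movement bounds for $\vect{w}_r$ and $a_r$ with the Gaussian tail bound on $\max_{r,i}\abs{\vect{w}_r(0)^\top\vect{x}_i}$ supplying the extra $\log(m/\delta)$, and the same continuity/contradiction argument. The only cosmetic difference is that you use a single radius $R$ for both layers where the paper uses two radii $R_w, R_a$ of different scales, which changes nothing beyond constants.
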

Theorem~\ref{thm:main_gf_joint} shows under the same assumptions as in Theorem~\ref{thm:main_gf}, we can achieve the same convergence rate as that of only training the first layer.
The proof of Theorem~\ref{thm:main_gf_joint} relies on the same arguments as the proof of Theorem~\ref{thm:main_gf}.
Again we consider the dynamics of the predictions and this dynamics is characterized by a Gram matrix.
We can show for all $t > 0$, this Gram matrix is close to the Gram matrix at the initialization phase.
We refer readers to appendix for the full proof.

\section{Discrete Time Analysis}
\label{sec:discrete}
In this section, we show randomly initialized gradient descent  with a constant positive step size converges to the global minimum at a linear rate.
We first present our main theorem.
\begin{thm}[Convergence Rate of Gradient Descent]\label{thm:main_gd}
Under the same assumptions as in Theorem~\ref{thm:main_gf}, if we set the number of hidden nodes $m = 
\Omega\left(\frac{n^6}{\lambda_0^4\delta^3}\right)
$, we i.i.d. initialize $\vect{w}_r \sim N(\vect{0},\mat{I})$, $a_r \sim \unif\left[\left\{-1,1\right\}\right]$ for $r \in [m]$, and we set the step size $\eta = O\left(\frac{\lambda_0}{n^2}\right)$ then with probability at least $1-\delta$ over the random initialization we have for $k=0,1,2,\ldots$\begin{align*}
\norm{\vect{u}(k)-\vect{y}}_2^2 \le \left(1-\frac{\eta \lambda _0}{2}\right)^{k}\norm{\vect{u}(0)-\vect{y}}_2^2.
\end{align*}
\end{thm}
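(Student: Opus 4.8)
The plan is to replay the gradient-flow argument of Section~\ref{sec:two_layer_regression} in discrete time, treating the finite step size as a perturbation of the continuous dynamics. The starting point is the one-step prediction dynamics: writing $u_i(k+1)-u_i(k) = \frac{1}{\sqrt{m}}\sum_{r=1}^m a_r\bigl(\relu{\vect{w}_r(k+1)^\top\vect{x}_i}-\relu{\vect{w}_r(k)^\top\vect{x}_i}\bigr)$ and substituting the gradient descent update~\eqref{eqn:gd}, I would split the index set $[m]$ into $S_i \triangleq \{r:\abs{\vect{w}_r(0)^\top\vect{x}_i}<R\}$, the units that can change their ReLU pattern on $\vect{x}_i$ while the weights stay within $R$ of initialization (with $R$ as in Lemma~\ref{lem:close_to_init_small_perturbation}), and its complement. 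For $r\notin S_i$ the activation pattern on $\vect{x}_i$ is frozen, so $\relu{\vect{w}_r(k+1)^\top\vect{x}_i}-\relu{\vect{w}_r(k)^\top\vect{x}_i}$ is exactly linear in $\vect{w}_r(k+1)-\vect{w}_r(k)$, and summing these terms reproduces $-\eta\sum_j(u_j(k)-y_j)\mat{H}_{ij}(k)$ up to the error of dropping the rows in $S_i$; for $r\in S_i$ the $1$-Lipschitzness of $\relu{\cdot}$ together with $\norm{\vect{w}_r(k+1)-\vect{w}_r(k)}_2\le\frac{\eta\sqrt{n}}{\sqrt{m}}\norm{\vect{u}(k)-\vect{y}}_2$ bounds the contribution. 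This yields $\vect{u}(k+1)-\vect{y}=(\mat{I}-\eta\mat{H}(k))(\vect{u}(k)-\vect{y})+\vect{\epsilon}(k)$ with $\norm{\vect{\epsilon}(k)}_2=O(\eta n R)\norm{\vect{u}(k)-\vect{y}}_2$, \emph{provided} $\abs{S_i}=O(mR)$ for every $i$. That last fact holds with probability $1-\delta$: each $\abs{S_i}$ is a sum of $m$ i.i.d.\ Bernoulli variables with mean $\le\frac{2R}{\sqrt{2\pi}}m$ by the Gaussian anti-concentration bound already used in Lemma~\ref{lem:close_to_init_small_perturbation}, so Hoeffding's inequality plus a union bound over $i\in[n]$ give $\abs{S_i}=O(mR)$ once $m=\Omega(\log(n/\delta)/R^2)$, which the theorem's choice of $m$ implies.

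Next I would set up an induction on $k$ with two hypotheses: (a) $\norm{\vect{u}(k)-\vect{y}}_2^2\le(1-\eta\lambda_0/2)^k\norm{\vect{u}(0)-\vect{y}}_2^2$, and (b) $\norm{\vect{w}_r(k)-\vect{w}_r(0)}_2\le R'\triangleq\frac{4\sqrt{n}\norm{\vect{y}-\vect{u}(0)}_2}{\sqrt{m}\lambda_0}$ for all $r\in[m]$. Hypothesis (b) at step $k+1$ follows from (a) at all earlier steps by telescoping the updates, using $\norm{\partial L/\partial\vect{w}_r(s)}_2\le\frac{\sqrt n}{\sqrt m}\norm{\vect{y}-\vect{u}(s)}_2$ (as in Lemma~\ref{lem:small_perturbation_close_to_init}) and summing the geometric series $\sum_{s\ge 0}(1-\eta\lambda_0/2)^{s/2}\le\frac{4}{\eta\lambda_0}$; combined with $R'<R$ and Lemma~\ref{lem:close_to_init_small_perturbation} this gives $\lambda_{\min}(\mat{H}(k))\ge\lambda_0/2$ at every step. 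For hypothesis (a) at step $k+1$ I would expand $\norm{\vect{u}(k+1)-\vect{y}}_2^2 = \norm{\vect{u}(k)-\vect{y}}_2^2 - 2\eta(\vect{u}(k)-\vect{y})^\top\mat{H}(k)(\vect{u}(k)-\vect{y}) + \eta^2\norm{\mat{H}(k)(\vect{u}(k)-\vect{y})}_2^2 + 2\bigl((\mat{I}-\eta\mat{H}(k))(\vect{u}(k)-\vect{y})\bigr)^\top\vect{\epsilon}(k) + \norm{\vect{\epsilon}(k)}_2^2$, and bound the four perturbation terms: the quadratic form is $\le-\eta\lambda_0\norm{\vect{u}(k)-\vect{y}}_2^2$; the second-order term is $\le\eta^2 n^2\norm{\vect{u}(k)-\vect{y}}_2^2$ since $\norm{\mat{H}(k)}_2\le\norm{\mat{H}(k)}_F\le n$; and the two $\vect{\epsilon}(k)$ terms are $O(\eta n R)\norm{\vect{u}(k)-\vect{y}}_2^2$. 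Choosing the constant in $\eta=O(\lambda_0/n^2)$ small and using $R=O(\delta\lambda_0/n^2)\ll\lambda_0/n$, these terms together are at most $\frac{\eta\lambda_0}{2}\norm{\vect{u}(k)-\vect{y}}_2^2$, so $\norm{\vect{u}(k+1)-\vect{y}}_2^2\le(1-\eta\lambda_0/2)\norm{\vect{u}(k)-\vect{y}}_2^2$, closing the induction.

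It then remains to justify $R'<R$, i.e.\ $m=\Omega\bigl(n^5\norm{\vect{y}-\vect{u}(0)}_2^2/(\lambda_0^4\delta^2)\bigr)$; by the same first/second-moment computation as in the proof of Theorem~\ref{thm:main_gf}, $\norm{\vect{y}-\vect{u}(0)}_2^2=O(n/\delta)$ with probability $1-\delta$, and substituting recovers the stated $m=\Omega(n^6/(\lambda_0^4\delta^3))$.

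I expect the one genuinely new difficulty, compared with the gradient-flow proof, to be the control of the error vector $\vect{\epsilon}(k)$: there is no exact identity $\frac{d}{dt}\vect{u}=\mat{H}(t)(\vect{y}-\vect{u})$ in discrete time, so one must isolate the units that flip their activation pattern within a single step, bound their number uniformly in $k$ by $O(mR)$ (this is where the high-probability event and the small value of $R$ are consumed), and verify that \emph{both} the first-order cross term and the second-order term involving $\vect{\epsilon}(k)$ are strictly lower order than $\eta\lambda_0\norm{\vect{u}(k)-\vect{y}}_2^2$ rather than merely comparable to it. Everything else is a direct discrete-time analogue of Lemmas~\ref{lem:H0_close_Hinft}--\ref{lem:main}.
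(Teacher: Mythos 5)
Your proposal is correct and follows essentially the same route as the paper's proof: the same induction on the linear-convergence condition together with the corollary bounding $\norm{\vect{w}_r(k)-\vect{w}_r(0)}_2$ by $R'$, the same split of $[m]$ into pattern-frozen and potentially-flipping units, the same perturbation terms ($\mat{H}^\perp$-type correction, flipping-unit term, and the $\eta^2 n^2$ quadratic term), and the same choice of $\eta$ and $m$. The only deviation is that you control the number of flipping units uniformly in $i$ via Hoeffding plus a union bound (at the cost of the harmless extra requirement $m\gtrsim \log(n/\delta)/R^2$, which the stated $m$ indeed implies), whereas the paper bounds $\sum_{i}\abs{S_i^\perp}$ in aggregate by Markov's inequality with a $1/\delta$ factor; both lead to the same conclusion.
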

Theorem~\ref{thm:main_gd} shows even though the objective function is non-smooth and non-convex, gradient descent with a constant step size still enjoys a linear convergence rate.
%As will be apparent in the proof, though the objective function is non-smooth and non-convex with respect to the parameters we are optimizing ($\mat{W}$), for the predictions$\vect{u}$, it admits a smooth and strongly convex like behavior which allow us to prove the linear rate.
 Our assumptions on the least eigenvalue and the number of hidden nodes are exactly the same as the theorem for gradient flow.
%Notably, our choice of step size is independent of number of hidden nodes $m$ in contrast to the previous work~\citep{li2018learning}.

\subsection{Proof of Theorem~\ref{thm:main_gd}}
We prove Theorem~\ref{thm:main_gd} by induction.
Our induction hypothesis is just the following convergence rate of the empirical loss.
\begin{condition}\label{cond:linear_converge}
	At the $k$-th iteration, we have 
	$\norm{\vect{y}-\vect{u}(k)}_2^2 \le (1-\frac{\eta \lambda_0}{2})^{k} \norm{\vect{y}-\vect{u}(0)}_2^2.$
\end{condition}
A directly corollary of this condition is the following bound of deviation from the initialization.
The proof is similar to that of Lemma~\ref{lem:small_perturbation_close_to_init} so we defer it to appendix.
\begin{cor}\label{cor:dist_from_init}
If Condition~\ref{cond:linear_converge} holds for $k'=0,\ldots,k$, then we have for every $r \in [m]$\begin{align}
\norm{\vect{w}_r(k+1)-\vect{w}_r(0)}_2 \le \frac{4\sqrt{n}\norm{\vect{y}-\vect{u}(0)}_2}{\sqrt{m} \lambda_0} \triangleq R'.
\end{align}
\end{cor}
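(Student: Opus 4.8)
The plan is to mirror the gradient-flow argument of Lemma~\ref{lem:small_perturbation_close_to_init}, replacing the time integral of the velocity $\norm{d\vect{w}_r(s)/ds}_2$ by a sum of the per-iteration displacements $\norm{\vect{w}_r(k'+1)-\vect{w}_r(k')}_2$. First I would write out a single gradient descent step for weight vector $r$ using~\eqref{eqn:gd} and~\eqref{eqn:gradient},
\begin{align*}
\vect{w}_r(k'+1)-\vect{w}_r(k') = -\frac{\eta}{\sqrt{m}}\sum_{i=1}^{n}(u_i(k')-y_i)a_r\vect{x}_i\indict\left\{\vect{w}_r(k')^\top\vect{x}_i \ge 0\right\},
\end{align*}
and bound its Euclidean norm using $\abs{a_r}=1$, $\norm{\vect{x}_i}_2=1$, the triangle inequality, and Cauchy--Schwarz:
\begin{align*}
\norm{\vect{w}_r(k'+1)-\vect{w}_r(k')}_2 \le \frac{\eta}{\sqrt{m}}\sum_{i=1}^{n}\abs{u_i(k')-y_i} \le \frac{\eta\sqrt{n}}{\sqrt{m}}\norm{\vect{u}(k')-\vect{y}}_2.
\end{align*}

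Next I would invoke Condition~\ref{cond:linear_converge}, which for each $k' \le k$ gives $\norm{\vect{u}(k')-\vect{y}}_2 \le (1-\frac{\eta\lambda_0}{2})^{k'/2}\norm{\vect{u}(0)-\vect{y}}_2$, and then telescope and sum the displacements:
\begin{align*}
\norm{\vect{w}_r(k+1)-\vect{w}_r(0)}_2 \le \sum_{k'=0}^{k}\norm{\vect{w}_r(k'+1)-\vect{w}_r(k')}_2 \le \frac{\eta\sqrt{n}}{\sqrt{m}}\norm{\vect{u}(0)-\vect{y}}_2\sum_{k'=0}^{\infty}\left(1-\frac{\eta\lambda_0}{2}\right)^{k'/2}.
\end{align*}
The only step requiring a small computation is bounding this geometric series. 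Using the elementary inequality $\sqrt{1-x}\le 1-\frac{x}{2}$ for $x\in[0,1]$ with $x=\frac{\eta\lambda_0}{2}$ (which lies in $[0,1]$ since the step size $\eta=O(\lambda_0/n^2)$ is small), we get $1-\sqrt{1-\frac{\eta\lambda_0}{2}}\ge\frac{\eta\lambda_0}{4}$, hence $\sum_{k'=0}^{\infty}(1-\frac{\eta\lambda_0}{2})^{k'/2}\le\frac{4}{\eta\lambda_0}$.

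Substituting this bound, the factors of $\eta$ cancel and we obtain $\norm{\vect{w}_r(k+1)-\vect{w}_r(0)}_2\le\frac{4\sqrt{n}\norm{\vect{u}(0)-\vect{y}}_2}{\sqrt{m}\lambda_0}=R'$, which is exactly the claimed bound. I do not anticipate any real obstacle: the corollary is a direct discrete transcription of the continuous-time distance bound in Lemma~\ref{lem:small_perturbation_close_to_init}, and the extra factor $4$ (compared with the constant $1$ there) arises precisely from replacing $\int_0^\infty e^{-\lambda_0 s}\,ds=1/\lambda_0$ by the discrete geometric sum above.
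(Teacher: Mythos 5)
Your proposal is correct and follows essentially the same route as the paper's own proof: bound each per-step displacement by $\eta$ times the gradient norm, which is at most $\frac{\eta\sqrt{n}}{\sqrt{m}}\norm{\vect{y}-\vect{u}(k')}_2$, invoke Condition~\ref{cond:linear_converge}, and sum the resulting geometric series to get the factor $4/(\eta\lambda_0)$. Your explicit justification of the geometric-series bound via $\sqrt{1-x}\le 1-x/2$ is a slightly more careful rendering of the step the paper states as an equality, but the argument is the same.
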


Now we show Condition~\ref{cond:linear_converge} holds for every $k = 0,1,\ldots$.
For the base case $k=0$, by definition Condition~\ref{cond:linear_converge} holds.
Suppose for $k'=0,\ldots,k$, Condition~\ref{cond:linear_converge} holds and we want to show Condition~\ref{cond:linear_converge} holds for $k'=k+1$.

%\begin{align*}
%u_{i}(k+1) = &\frac{1}{\sqrt{m}}\sum_{r=1}^{m}a_r \relu{\vect{w}_r(k+1)^\top \vect{x}_i}  \\
%= & \frac{1}{\sqrt{m}}\sum_{r=1}^{m}a_r \relu{\vect{w}_r(k+1)^\top \vect{x}_i}  
%\end{align*}
%Define $\vect{\preddiff}(k) = \vect{u}(k+1) - \vect{u}(k)$.
%We first give an estimate of this quantity
%\begin{align*}
%\norm{\vect{u}(k+1)-\vect{y}}_2^2 = \norm{\vect{u}(k)-\eta\frac{\partial -\vect{y}}}_2^2 
%\end{align*}
Our strategy is similar to the proof of Theorem~\ref{thm:main_gf}.
We define the event \[A_{ir} = \left\{\exists \vect{w}: \norm{\vect{w}-\vect{w}_r(0)} \le R, \indict\left\{\vect{x}_i^\top \vect{w}_r(0) \ge 0\right\} \neq \indict\left\{\vect{x}_i^\top \vect{w} \ge 0\right\} \right\}.\]
where $
R = \frac{c\lambda_0}{n^2}$ for some small positive constant $c$.
%Recall in the proof of Theorem~\ref{thm:main_gf}, we have $
%P(A_{ir}) = P_{z \sim N(0,1)}\left(\abs{z} < R\right) \le \frac{2R}{\sqrt{2\pi}}.
%$
Different from gradient flow, for gradient descent we need a more refined analysis.
%First, For every $i \in [n]$ and $r \in [m]$, we define the vent \[
%A_{ir} = \left\{\abs{\vect{w}_r(0)^\top \vect{x_i} }\ge R\right\}
%\]
%where we define \[
%R = \frac{c\lambda_0}{n^2}
%\] for some small positive constant $c$.
We let
$S_i = \left\{r \in [m]: \indict\{A_{ir}\}= 0\right\}$ and $
S_i^\perp = [m] \setminus S_i$.
The following lemma bounds the sum of sizes of $S_i^\perp$.
The proof is similar to the analysis used in Lemma~\ref{lem:close_to_init_small_perturbation}.
See Section~\ref{sec:technical_proofs} for the whole proof.
\begin{lem}\label{lem:bounds_Si_perp}
With probability at least $1-\delta$ over the initialization, we have $\sum_{i=1}^{n}\abs{S_i^\perp} \le  \frac{CmnR}{\delta}$ for some positive constant $C>0$.
\end{lem}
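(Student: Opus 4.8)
The plan is to control $\expect\big[\sum_{i=1}^n \abs{S_i^\perp}\big]$ directly and then convert it into a high-probability bound via Markov's inequality. The first step is to reuse the elementary observation from the proof of Lemma~\ref{lem:close_to_init_small_perturbation}: the event $A_{ir}$ occurs if and only if $\abs{\vect{w}_r(0)^\top \vect{x}_i} < R$. Consequently $\abs{S_i^\perp} = \sum_{r=1}^m \indict\{A_{ir}\}$ is a sum of independent Bernoulli random variables (independence across $r$ because the $\vect{w}_r(0)$ are i.i.d.).

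Since $\vect{w}_r(0) \sim N(\vect{0},\mat{I})$ and $\norm{\vect{x}_i}_2 = 1$, the inner product $\vect{w}_r(0)^\top \vect{x}_i$ is a standard normal variable, so the Gaussian anti-concentration bound gives $P(A_{ir}) = P_{z\sim N(0,1)}\left(\abs{z} < R\right) \le \frac{2R}{\sqrt{2\pi}}$. Hence $\expect[\abs{S_i^\perp}] = \sum_{r=1}^m P(A_{ir}) \le \frac{2mR}{\sqrt{2\pi}}$, and summing over $i \in [n]$ gives $\expect\big[\sum_{i=1}^n \abs{S_i^\perp}\big] \le \frac{2mnR}{\sqrt{2\pi}}$. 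Applying Markov's inequality to the nonnegative quantity $\sum_{i=1}^n \abs{S_i^\perp}$ then yields, with probability at least $1-\delta$, $\sum_{i=1}^n \abs{S_i^\perp} \le \frac{2mnR}{\sqrt{2\pi}\,\delta}$, which is the claim with $C = \tfrac{2}{\sqrt{2\pi}}$.

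I do not expect a genuine obstacle here: this is essentially the same anti-concentration and counting estimate used for the Gram-matrix perturbation in Lemma~\ref{lem:close_to_init_small_perturbation}, only now tracked per hidden unit rather than aggregated into a Frobenius-norm bound. The one place a sharper statement is available is the final step: because each $\abs{S_i^\perp}$ is a sum of $m$ independent Bernoulli variables with mean $O(R)$, a Bernstein or Chernoff bound together with a union bound over $i \in [n]$ would give the same conclusion with a $\log(n/\delta)$ factor in place of $1/\delta$. Since the cruder Markov bound already suffices for the subsequent induction step, I would present the proof with Markov's inequality to keep it as transparent as possible.
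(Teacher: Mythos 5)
Your proof is correct and follows essentially the same route as the paper's: bound $P(A_{ir})$ by Gaussian anti-concentration, sum to get $\expect\bigl[\sum_i \abs{S_i^\perp}\bigr] \le \frac{2mnR}{\sqrt{2\pi}}$, and apply Markov's inequality. The remark that a Chernoff-plus-union-bound argument would trade the $1/\delta$ for a $\log(n/\delta)$ factor is a valid observation, but as you note the Markov bound is what the paper uses and is all that the induction requires.
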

Next, we calculate the difference of predictions between two consecutive iterations, analogue to $\frac{du_i(t)}{dt}$ term in Section~\ref{sec:two_layer_regression}.
\begin{align*}
u_i(k+1) - u_i(k) = &\frac{1}{\sqrt{m}}\sum_{r=1}^{m} a_r \left(\relu{\vect{w}_r(k+1)^\top \vect{x}_i} - \relu{\vect{w}_r(k)^\top \vect{x}_i}\right)\\
= & \frac{1}{\sqrt{m}}\sum_{r=1}^{m} a_r \left(\relu{\left(
	\vect{w}_r(k) - \eta\frac{\partial L(\mat{W}(k))}{\partial \vect{w}_r(k)}
	\right)^\top \vect{x}_i} - \relu{\vect{w}_r(k)^\top \vect{x}_i}\right).
\end{align*}
Here we divide the right hand side into two parts.
$I_1^i$ accounts for terms that the pattern does not change and $I_2^i$ accounts for terms that pattern may change.
\begin{align*}
I_1^i \triangleq  &\frac{1}{\sqrt{m}}\sum_{r \in S_i} a_r \left(\relu{\left(
	\vect{w}_r(k) - \eta\frac{\partial L(\mat{W}(k))}{\partial \vect{w}_r(k)}
	\right)^\top \vect{x}_i} - \relu{\vect{w}_r(k)^\top \vect{x}_i}\right) \\
I_2^i \triangleq &\frac{1}{\sqrt{m}}\sum_{r \in S_i^\perp} a_r \left(\relu{\left(
	\vect{w}_r(k) - \eta\frac{\partial L(\mat{W}(k))}{\partial \vect{w}_r(k)}
	\right)^\top \vect{x}_i} - \relu{\vect{w}_r(k)^\top \vect{x}_i}\right)
\end{align*}
We view $I_2^i$ as a perturbation and bound its magnitude.
Because ReLU is a $1$-Lipschitz function and $\abs{a_r}  =1$, we have 
\begin{align*}
\abs{I_2^i} \le \frac{\eta}{\sqrt{m}} \sum_{r\in S_i^\perp} \abs{\left(
\frac{\partial L(\mat{W}(k))}{\partial \vect{w}_r(k)}
	\right)^\top \vect{x}_i}
\le \frac{\eta \abs{S_i^\perp}}{\sqrt{m}}  \max_{r \in [m]} \norm{
	\frac{\partial L(\mat{W}(k))}{\partial \vect{w}_r(k)}
}_2 
\le  \frac{\eta \abs{S_i}^\perp \sqrt{n}\norm{\vect{u}(k)-\vect{y}}_2}{m}.
\end{align*}
To analyze $I_1^i$, by Corollary~\ref{cor:dist_from_init}, we know $\norm{\vect{w}_r(k)-\vect{w}_r(0)} \le R'$ and $\norm{\vect{w}_r(k)-\vect{w}_r(0)} \le R'$ for all $r \in [m]$.
Furthermore, because $R' < R$, we know $\indict\left\{\vect{w}_r(k+1)^\top \vect{x}_i\ge 0\right\}= \indict\left\{\vect{w}_r(k)^\top \vect{x}_i \ge 0\right\}$ for $r \in S_i.$
Thus we can find a more convenient expression of $I_1^i$ for analysis
\begin{align*}
I_1^i = 
%-\frac{\eta }{\sqrt{m}} \sum_{r \in S_i} a_r\langle \frac{\partial L(\mat{W}(k))}{\partial \vect{w}_r}, \vect{x_i} \indict\left\{\vect{w}_r^\top \vect{x}_i \ge 0\right\} \rangle
&-\frac{\eta}{m}\sum_{j=1}^{n}\vect{x}_i^\top \vect{x}_j \left(u_j-y_j\right)\sum_{r \in S_i} \indict\left\{\vect{w}_r(k)^\top \vect{x}_i \ge 0, \vect{w}_r(k)^\top \vect{x}_j\ge 0 \right\}\\
=  &-\eta \sum_{j=1}^{n}(u_j-y_j)(\mat{H}_{ij}(k) -\mat{H}_{ij}^\perp(k))
\end{align*} where $
\mat{H}_{ij}(k) = \frac{1}{m}\sum_{r=1}^{m}\vect{x}_i^\top \vect{x}_j \indict\left\{\vect{w}_r(k)^\top \vect{x}_i \ge 0, \vect{w}_r(k)^\top \vect{x}_j \ge 0\right\}
 $ is just the $(i,j)$-th entry of a discrete version of Gram matrix defined in Section~\ref{sec:two_layer_regression}
and 
$
\mat{H}_{ij}^\perp(k) = \frac{1}{m}\sum_{r \in S_i^\perp}\vect{x}_i^\top \vect{x}_j \indict\left\{\vect{w}_r(k)^\top \vect{x}_i \ge 0, \vect{w}_r(k)^\top \vect{x}_j \ge 0\right\}
$ is a perturbation matrix.
Let $\mat{H}^\perp(k)$ be the $n \times n$ matrix with $(i,j)$-th entry being $\mat{H}_{ij}^\perp(k)$.
Using Lemma~\ref{lem:bounds_Si_perp},  we obtain an upper bound of the operator norm\begin{align*}
\norm{\mat{H}^\perp(k)}_2 \le \sum_{(i,j)=(1,1)}^{(n,n)} \abs{\mat{H}_{ij}^\perp (k)}
\le \frac{n \sum_{i=1}^{n}\abs{S_i^\perp}}{m}
\le   \frac{Cn^2 mR}{\delta m} 
\le  \frac{Cn^2R}{\delta }.
\end{align*}
Similar to the classical analysis of gradient descent, we also need bound the quadratic term.
\begin{align*}
\norm{\vect{u}(k+1)-\vect{u}(k)}_2^2 \le \eta^2 \sum_{i=1}^{n}\frac{1}{m}\left(\sum_{r=1}^{m} \norm{\frac{\partial L(\mat{W}(k))}{\partial \vect{w}_r(k)}}_2\right)^2 
%\le & \eta^2 n m \frac{n \norm{\vect{y}-\vect{u}(k)}_2^2}{m} \\
\le  \eta^2 n^2 \norm{\vect{y}-\vect{u}(k)}_2^2.
\end{align*}
With these estimates at hand, we are ready to prove the induction hypothesis.
\begin{align*}
\norm{\vect{y}-\vect{u}(k+1)}_2^2 = 
&\norm{\vect{y}-\vect{u}(k) - (\vect{u}(k+1)-\vect{u}(k))}_2^2\\
= & \norm{\vect{y}-\vect{u}(k)}_2^2 - 2 \left(\vect{y}-\vect{u}(k)\right)^\top \left(\vect{u}(k+1)-\vect{u}(k)\right) + \norm{\vect{u}(k+1)-\vect{u}(k)}_2^2\\
= & \norm{\vect{y}-\vect{u}(k)}_2^2 - 2\eta  \left(\vect{y}-\vect{u}(k)\right)^\top \mat{H}(k) \left(\vect{y}-\vect{u}(k)\right) \\
&+ 2\eta \left(\vect{y}-\vect{u}(k)\right)^\top \mat{H}(k)^\perp \left(\vect{y}-\vect{u}(k)\right) -2  \left(\vect{y}-\vect{u}(k)\right)^\top\vect{I}_2\\ &+\norm{\vect{u}(k+1)-\vect{u}(k)}_2^2 \\
% \le& \norm{\vect{y}-\vect{u}(k)}_2^2 - \eta\lambda_0 \norm{\vect{y}-\vect{u}(k)}_2^2  + 2\eta \norm{\mat{H}^\perp(k)}_2\norm{\vect{y}-\vect{u}(k)}_2^2  \\
% &+ \frac{2\eta\sum_{i=1}^{n}\abs{S_i^\perp}\sqrt{n}\norm{\vect{y}-\vect{u}(k)}_2^2}{m} + \eta^2n^2 \norm{\vect{y}-\vect{u}(k)}_2^2 \\
\le & (1-\eta\lambda_0 + \frac{2C\eta n^2R}{\delta} + \frac{2C\eta n^{3/2} R}{\delta} + \eta^2n^2) \norm{\vect{y}-\vect{u}(k)}_2^2 \\
\le & (1-\frac{\eta\lambda_0}{2})\norm{\vect{y}-\vect{u}(k)}_2^2 .
\end{align*}
The third equality we used the decomposition of $\vect{u}(k+1)-\vect{u}(k)$.
The first inequality we used the Lemma~\ref{lem:close_to_init_small_perturbation}, the bound on the step size, the bound on $\vect{I}_2$, the bound on $\norm{\mat{H}(k)^\perp}_2$ and the bound on $\norm{\vect{u}(k+1)-\vect{u}(k)}_2^2$.
The last inequality we used the bound of the step size and the bound of $R$.
Therefore Condition~\ref{cond:linear_converge} holds for $k'=k+1$.
Now by induction, we prove Theorem~\ref{thm:main_gd}.
\qed

\section{Experiments}
\label{sec:exp}
%generali introduction and setup
In this section, we use synthetic data to corroborate our theoretical findings.
We use the  initialization and training procedure described in Section~\ref{sec:intro}.
For all experiments, we run $100$ epochs of gradient descent and use a fixed  step size.
We uniformly generate $n=1000$ data points from a $d=1000$ dimensional unit sphere and generate labels from a one-dimensional standard Gaussian distribution.

%goal
We test three metrics with different widths ($m$).
First, we test how the amount of over-parameterization affects the convergence rates.
Second, we test the relation between the amount of over-parameterization and the number of pattern changes.
Formally, at a given iteration $k$, we check $\frac{\sum_{i=1}^{m}\sum_{r=1}^{m}\indict\left\{
	\sign\left(\vect{w}_r(0)^\top \vect{x}_i \right) \neq \sign\left(\vect{w}_r(k)^\top \vect{x}_i \right) 
	\right\}}{mn}$ (there are $mn$ patterns).
This aims to verify Lemma~\ref{lem:close_to_init_small_perturbation}.
Last, we test the relation between the amount of over-parameterization and the maximum of the distances between weight vectors and their initializations.
Formally, at a given iteration $k$, we check $\max_{r \in [m]} \norm{\vect{w}_r(k) - \vect{w}_r(0)}_2$.
This aims to verify Lemma~\ref{lem:small_perturbation_close_to_init} and Corollary~\ref{cor:dist_from_init}.

Figure~\ref{fig:synthetic_convergence_rate} shows as $m$ becomes larger, we have better convergence rate.
We believe the reason is as  $m$ becomes larger, $\mat{H}(t)$ matrix becomes more stable, and thus has larger least eigenvalue. 
Figure~\ref{fig:synthetic_pattern} and 
Figure~\ref{fig:synthetic_max_distance} show as $m$ becomes larger, the percentiles of pattern changes and the maximum distance from the initialization become smaller.
These empirical findings are consistent with our theoretical results.

%For MNIST dataset, we randomly sampled $n=1000$ data points and normalize each input. 
%The labels are again generated from a one dimensional standard Gaussian distribution.
%Figure~\ref{fig:mnist} shows for MNIST dataset, we have the similar behavior as that of the synthetic data set.

%Comparing Figure~\ref{fig:synthetic} and Figure~\ref{fig:mnist} we see that on MNIST data set, the convergence is significantly slower.
%We thus report $\lambda_0$ for these two cases.
%We find $\lambda_0 = 0.2459$ for the synthetic data and $\lambda_0 = 0.0373$ for MNIST dataset.
%Note this corroborates our theorem that larger $\lambda_0$ leads to faster convergence.

\begin{figure*}
	\centering
	\begin{subfigure}[t]{0.3\textwidth}
		\includegraphics[width=\textwidth]{./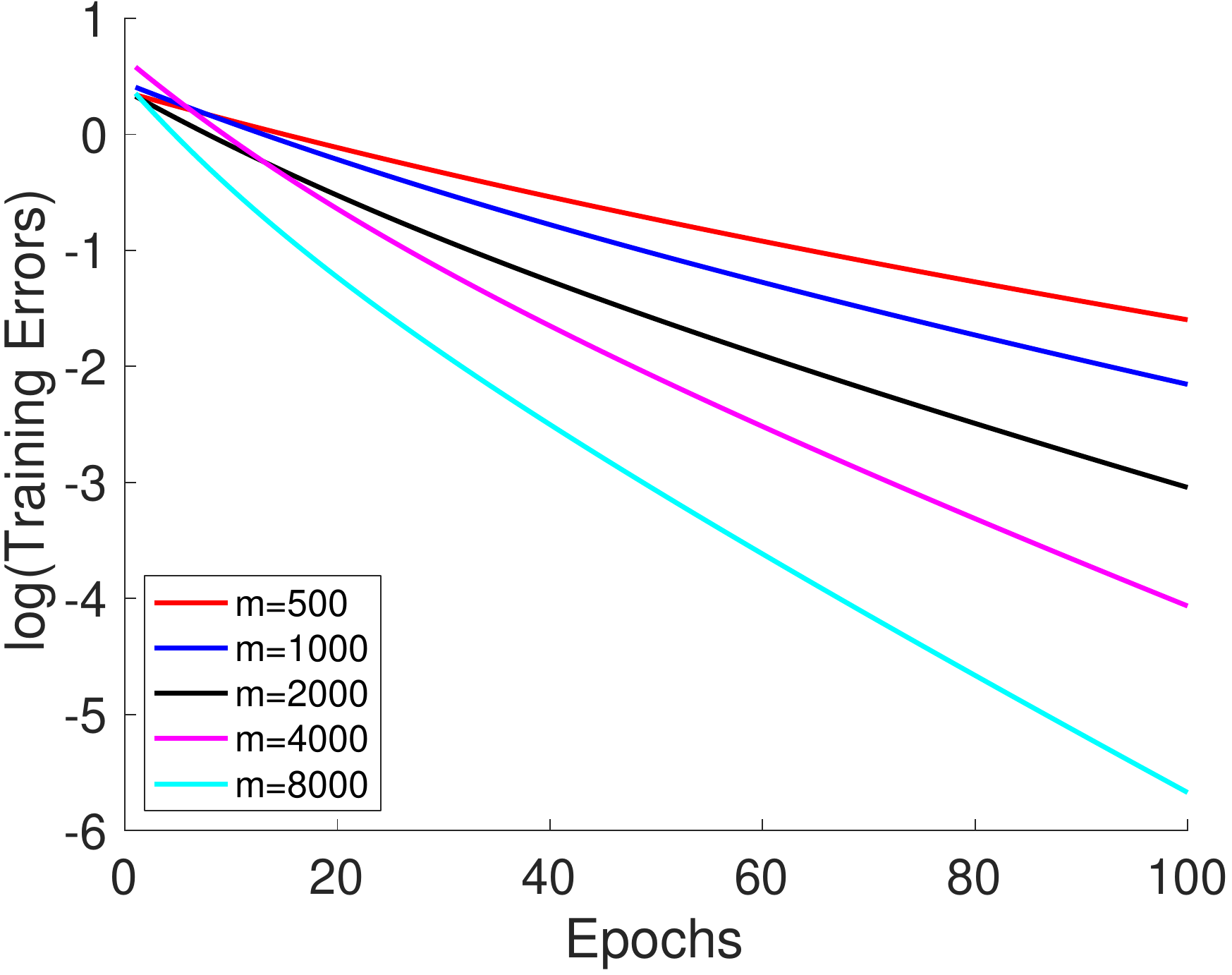}
		\caption{Convergence rates.}\label{fig:synthetic_convergence_rate}
	\end{subfigure}	
	\quad
	\begin{subfigure}[t]{0.3\textwidth}
		\includegraphics[width=\textwidth]{./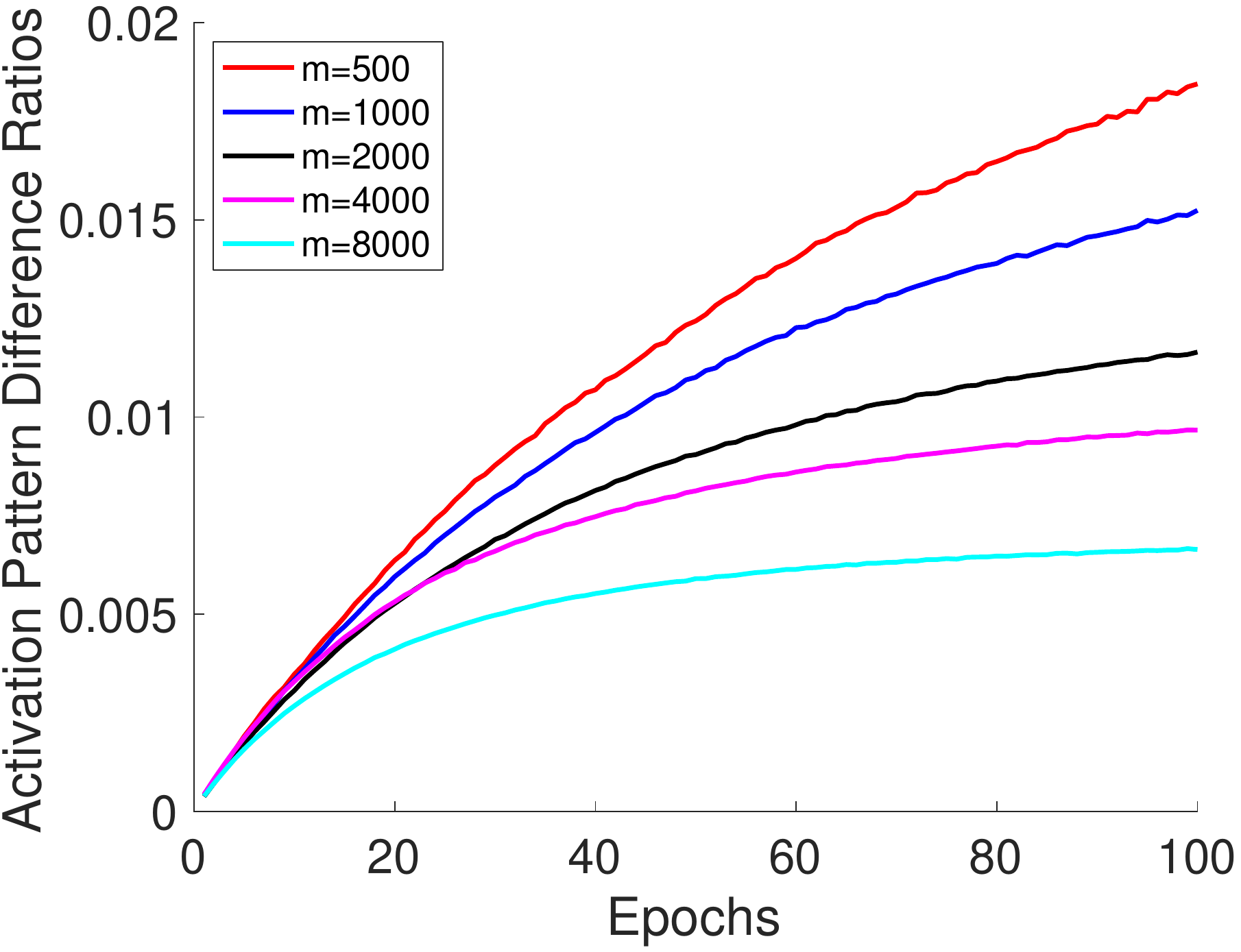}
\caption{Percentiles of pattern changes.}\label{fig:synthetic_pattern}
	\end{subfigure}	
	\quad
	\begin{subfigure}[t]{0.3\textwidth}
		\includegraphics[width=\textwidth]{./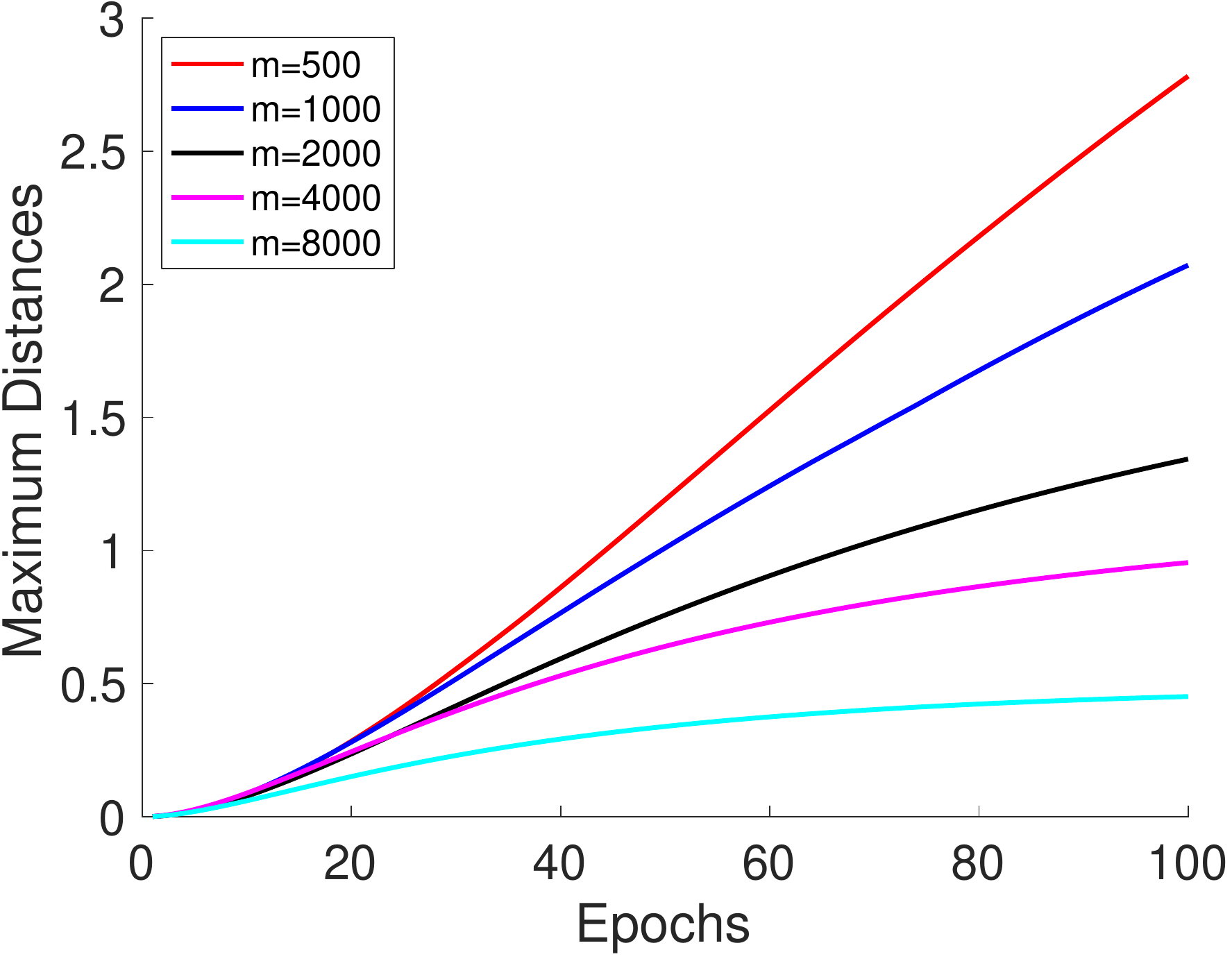}
		\caption{Maximum distances from initialization.
			}\label{fig:synthetic_max_distance}
	\end{subfigure}	
	\caption{Results on synthetic data. 
	}\label{fig:synthetic}
\end{figure*}

%\begin{figure*}
%	\centering
%	\begin{subfigure}[t]{0.3\textwidth}
%		\includegraphics[width=\textwidth]{./figs/training_errors_mnist.pdf}
%	\end{subfigure}	
%%	\qquad
%	\begin{subfigure}[t]{0.3\textwidth}
%		\includegraphics[width=\textwidth]{./figs/patterns_mnist.pdf}
%	\end{subfigure}	
%	\begin{subfigure}[t]{0.3\textwidth}
%		\includegraphics[width=\textwidth]{./figs/max_distances_mnist.pdf}
%	\end{subfigure}	
%	\caption{Results on MNIST data. $\lambda_0=0.0373$.
%	}\label{fig:mnist}
%\end{figure*}

\section{Conclusion and Discussion}
\label{sec:conclusion}
In this paper we show with over-parameterization, gradient descent provable converges to the global minimum of the empirical loss at a linear convergence rate.
The key proof idea is to show the over-parameterization makes Gram matrix remain positive definite for all iterations, which in turn guarantees the linear convergence.
Here we list some future directions.

First, we believe our approach can be generalized to  deep neural networks.
We elaborate the main idea here for gradient flow.
Consider a deep neural network of the form\[
f(\vect{x},\vect{W},\vect{a}) = \vect{a}^\top \relu{\mat{W}^{(H)}\cdots\relu{\mat{W}^{(1)}\vect{x}}}
\] where $\vect{x} \in \mathbb{R}^{d}$ is the input, $\mat{W}^{(1)} \in \mathbb{R}^{m \times d}$ is the first layer, $\mat{W}^{(h)} \in \mathbb{R}^{m \times m}$ for $h = 2,\ldots, H$ are the middle layers and $\vect{a} \in \mathbb{R}^{m}$ is the output layer.
Recall $u_i$ is the $i$-th prediction.
If we use the quadratic loss, we can compute \[
\frac{d\mat{W}^{(h)}(t)}{dt} = - \frac{\partial L(\mat{W}(t))}{\partial \mat{W}^{(h)}(t)} = \sum_{i=1}^{n}\left(y_i-u_i(t)\right) \frac{\partial u_i(t)}{\partial \mat{W}^{(h)}(t)}.
\]
Similar to Equation~\eqref{eqn:individual_dynamics}, we can calculate \[
\frac{d u_i(t)}{d t} = \sum_{h=1}^{H} \langle \frac{\partial u_i(t)}{\partial \mat{W}^{(h)}} , \frac{\partial \mat{W}^{(h)}}{dt} \rangle = \sum_{j=1}^{n}(y_j-u_j(t))\sum_{h=1}^{H} \mat{G}^{(h)}_{ij}(t)
\] where $\mat{G}^{(h)}(t) \in \mathbb{R}^{n \times n}$ with $\mat{G}^{(h)}_{ij}(t) = \langle \frac{\partial u_i(t)}{\mat{W}^{(h)}(t)},\frac{\partial u_j(t)}{\mat{W}^{(h)}(t)}\rangle$.
Therefore, similar to Equation~\eqref{eqn:key_dynamics}, we can write \[
\frac{d\vect{u}(t)}{dt} = \sum_{h=1}^{H}\mat{G}^{(h)}(t)\left(\vect{y}-\vect{u}(t)\right).
\]
Note for every $h \in [H]$, $\mat{G}^{(h)}$ is a Gram matrix and thus it is positive semidefinite.
If $\sum_{h=1}^{H}\mat{G}^{(h)}(t)$ has a lower bounded least eigenvalue for all $t$, then similar to Section~\ref{sec:two_layer_regression}, gradient flow converges to zero training loss at a linear convergence rate.
Based on our observations in Remark~\ref{rem:key_remark}, we conjecture that if $m$ is large enough, $\sum_{h=1}^{H}\mat{G}^{(h)}(0)$ is close to a fixed matrix $\sum_{h=1}^{H}\mat{G}^{(h)}_{\infty}$ and $\sum_{h=1}^{H}\mat{G}^{(h)}(t)$ is close its initialization $\sum_{h=1}^{H}\mat{G}^{(h)}(0)$ for all $t>0$.
Therefore, using the same arguments as we used in Section~\ref{sec:two_layer_regression}, as long as $\sum_{h=1}^{H}\mat{G}^{(h)}_{\infty}$ has a lower bounded least eigenvalue, gradient flow converges to zero training loss at a linear convergence rate.

Second, we believe the number of hidden nodes $m$ required  can be reduced.
For example, previous work~\citep{soudry2016no} showed $m \ge \frac{n}{d}$ is enough to make all differentiable local minima global.
In our setting, using advanced tools from probability and matrix perturbation theory to analyze $\mathbf{H}(t)$, we may be able to tighten the bound.

Lastly, in our paper, we used the empirical loss as a potential function to measure the progress.
If we use another potential function, we may be able to prove the convergence rates of accelerated methods.
This technique has been exploited in~\cite{wilson2016lyapunov} for analyzing convex optimization.
It would be interesting to bring their idea to analyze other first order methods for optimizing neural networks.

\subsubsection*{Acknowledgments}
\label{sec:ack}
This research was partly funded by AFRL grant FA8750-17-2-0212 and DARPA D17AP00001.
We thank Wei Hu, Jason D. Lee and Ruosong Wang for useful discussions.

\bibliography{simonduref}
\bibliographystyle{plainnat}

\newpage
\appendix
\section{Technical Proofs for Section~\ref{sec:two_layer_regression}}
\label{sec:technical_proofs}
\begin{proof}[Proof of Theorem~\ref{thm:least_eigen}]
	The proof of this lemma just relies on standard real and functional analysis.
	Let $\mathcal{H}$ be the Hilbert space of integrable $d$-dimensional vector fields on $\mathbb{R}^d$: $f \in \mathcal{H}$ if $\expect_{\vect{w} \sim N(\vect{0},\mat{I})}\left[\abs{f(\vect{w})}^2\right] < \infty$. 
	The inner product of this space is then $\langle f,g\rangle_{\mathcal{H}} = \expect_{\vect{w}\sim N(\vect{0},\mat I)} \left[f(\vect{w})^\top g(\vect{w})\right]$.
	
	ReLU activation induces an infinite-dimensional feature map $\phi$ which is defined as for any $\vect{x} \in \mathbb{R}^d$, $(\phi(\vect{x}))(\vect{w}) = \vect{x} \indict\left\{\vect{w}^\top \vect{x} \ge 0\right\}$ where $\vect{w}$ can be viewed as the index. 
	Now to prove $\mat H^{\infty}$ is strictly positive definite, it is equivalent to show $\phi(\vect{x}_1),\ldots,\phi(\vect{x}_n) \in \mathcal{H}$ are linearly independent.
	Suppose that there are $\alpha_1,\cdots,\alpha_n\in \mathbb{R}$ such that
	\begin{equation*}
	\alpha_1 \phi(\vect{x}_i)+\cdots+\alpha_n \phi(\vect{x}_n)=0 \text{ in }\mathcal{H}.
	\end{equation*}
	This means that
	\begin{equation*}
	\alpha_1 \phi(\vect{x}_1)(\vect{w})+\cdots+\alpha_n \phi(\vect{x}_n)(\vect{w})=0 \text{ a.e.}
	\end{equation*}
	Now we prove $\alpha_i=0$ for all $i$.
	
	We define $D_i = \left\{\vect{w} \in \mathbb{R}^{d}: \vect{w}^\top \vect{x}_i =0 \right\}$.
	This is set of discontinuities of $\phi(\vect{x}_i)$.
	The following lemma characterizes the basic property of these discontinuity sets.
	\begin{lem}\label{lem:no_overlap}
		If for any $i\neq j$, $\vect{x}_i \not \parallel \vect{x}_j$, then for any $i \in [m]$, $D_i \not\subset \bigcup\limits_{j\neq i}D_j$.
	\end{lem}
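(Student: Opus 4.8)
The plan is to exhibit a single vector lying in $D_i$ but in none of the other $D_j$'s; producing such a vector is exactly what it means to say $D_i \not\subset \bigcup_{j\neq i} D_j$. First I would record the elementary structure of these sets: $D_i = \left\{\vect{w} : \vect{w}^\top \vect{x}_i = 0\right\}$ is precisely the orthogonal complement of $\vect{x}_i$, hence a linear subspace of $\mathbb{R}^d$ of dimension $d-1$ (the $\vect{x}_i$ are nonzero since they are unit vectors), and likewise for each $D_j$. The crucial observation is that $D_i = D_j$ if and only if $\vect{x}_i$ and $\vect{x}_j$ span the same line through the origin, i.e. $\vect{x}_i \parallel \vect{x}_j$: passing to orthogonal complements, $D_i = D_j$ forces the line spanned by $\vect{x}_i$ to equal the line spanned by $\vect{x}_j$. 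So under the hypothesis that no two inputs are parallel we get $D_i \neq D_j$ for every $j \neq i$, and therefore $D_i \cap D_j$ is a \emph{proper} linear subspace of $D_i$, of dimension at most $d-2$.

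It then remains to argue that the $(d-1)$-dimensional space $D_i$ cannot be covered by the finite union $\bigcup_{j\neq i}(D_i \cap D_j)$ of its proper subspaces. I would give this the standard ``measure zero'' treatment: each $D_i \cap D_j$, having dimension at most $d-2$, has vanishing $(d-1)$-dimensional Lebesgue measure when viewed inside $D_i$; a finite union of such sets still has measure zero inside $D_i$; but $D_i$ itself has infinite $(d-1)$-dimensional measure, so the containment $D_i \subset \bigcup_{j\neq i}(D_i \cap D_j)$ is impossible. (Equivalently, one may cite the purely algebraic fact that a vector space over an infinite field is never a finite union of proper subspaces, or simply pick a generic $\vect{w} \in D_i$ and note that generically $\vect{w}^\top \vect{x}_j \neq 0$ for all $j \neq i$.) Any of these arguments produces a $\vect{w} \in D_i \setminus \bigcup_{j\neq i} D_j$, which proves the lemma.

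The step that needs the most care is the equivalence $D_i = D_j \iff \vect{x}_i \parallel \vect{x}_j$, since this is where the ``no two inputs parallel'' hypothesis is actually used: it converts the hypothesis into the statement that $D_i \cap D_j$ is properly contained in $D_i$, after which the measure (or genericity) argument finishes immediately. I do not anticipate a genuine obstacle beyond this bookkeeping — the lemma is, at bottom, the fact that finitely many pairwise-distinct hyperplanes through the origin cannot cover another hyperplane through the origin. The degenerate case $d = 1$ is vacuous, since then the no-parallel hypothesis can hold only if there is a single input, whereupon $\bigcup_{j\neq i} D_j$ is empty and $D_i \neq \emptyset$.
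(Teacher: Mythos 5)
Your proposal is correct and follows essentially the same route as the paper: the paper also works with the Lebesgue measure on $D_i$, notes that each $D_i \cap D_j$ is a lower-dimensional (hence measure-zero) subspace of $D_i$, and concludes that the finite union $\bigcup_{j\neq i} D_j$ cannot cover $D_i$. The only difference is that you spell out explicitly the step the paper leaves implicit --- that $\vect{x}_i \not\parallel \vect{x}_j$ is exactly what guarantees $D_i \cap D_j$ is a \emph{proper} subspace of $D_i$ --- which is a welcome clarification but not a different argument.
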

	Now for a fixed $i \in [n]$, since $D_i \not \subset \bigcup\limits_{j \neq i} D_j$, we can choose $\vect{z} \in D_i \setminus \bigcup\limits_{j\neq i}D_j$. 
	Note $D_j, j\neq i$ are closed sets.
	We can pick $r_0>0$ small enough such that
	$
	B(\vect{z},r)\cap D_j=\emptyset,\forall j\neq i,r\le r_0.
	$
	Let $B(\vect{z},r)=B_r^+\sqcup B_r^-$ where
	\begin{align*}
	B_r^+= B(\vect{z},r)\cap \{\vect w\in \mathbb{R}^d:\vect{w}^\top \vect x_i>0\}.
	\end{align*}
	For $j\neq i$, $\phi(\vect x_j)(\vect w)$ is continuous in a neighborhood of $\vect z$, then for any $\epsilon>0$ there is a small enough $r>0$ such that
	\begin{align*}
	\forall \vect w\in B(\vect z,r), |\phi(\vect x_j)(\vect w)-\phi(\vect x_j)(\vect z)|< \epsilon.
	\end{align*}
	
	Let $\mu$ be the Lebesgue measure on $\mathbb{R}^d$.
	We have
	\begin{align*}
	\left|\frac{1}{\mu (B_r^+)}\int_{B_r^+}\phi(\vect x_j)(\vect w)d \vect w-\phi(\vect x_j)(\vect z)\right|
	\le \frac{1}{\mu (B_r^+)}\int_{B_r^+}\left|\phi(\vect x_j)(\vect w)-\phi(\vect x_j)(\vect z)\right|d\vect w
	< \epsilon
	\end{align*}
	and similarly
	\begin{align*}
	\left|\frac{1}{\mu(B_r^-)}\int_{B_r^-}\phi(\vect x_j)(\vect w)d\vect w-\phi(\vect x_j)(\vect z)\right|
	\le \frac{1}{\mu(B_r^-)}\int_{B_r^-}\left|\phi(\vect x_j)(\vect w)-\phi(\vect x_j)(\vect z)\right|d\vect w
	< \epsilon.
	\end{align*}
	Thus, we have
	\begin{align*}
	\lim\limits_{r\rightarrow 0+}\frac{1}{\mu(B_r^+)}\int_{B_r^+}\phi(\vect x_j)(\vect w)d\vect w
	=\lim\limits_{r\rightarrow 0+}\frac{1}{\mu(B_r^-)}\int_{B_r^-}\phi(\vect x_j)(\vect w)d\vect w
	=\phi(\vect x_j)(\vect z).
	\end{align*}
	Therefore, as $r\rightarrow 0+$, by continuity, we have 
	\begin{align}
	\forall j\neq i, \frac{1}{\mu(B_r^+)}\int_{B_r^+} \phi(\vect x_j)(\vect w)d \vect w-\frac{1}{\mu(B_r^-)}\int_{B_r^-} \phi(\vect x_j)(\vect w)d\vect w\rightarrow 0 \label{eqn:j_term}
	\end{align}
	Next recall that $(\phi(\vect x))(\vect w)=\vect x \indict\left\{\vect x^\top \vect w>0\right\}$, so for $\vect w\in B^+_r$ and $\vect x_i$, $(\phi(\vect x_i))(\vect w)=\vect x_i$.
	Then, we have
	\begin{align}
	\lim\limits_{r\rightarrow 0+}\frac{1}{\mu(B_r^+)}\int_{B_r^+}\phi(\vect x_j)(\vect w)d \vect w=\lim\limits_{r\rightarrow 0+}\frac{1}{\mu(B_r^+)}\int_{B_r^+}\vect x_id\vect w=\vect x_i.\label{eqn:i_term_pos}
	\end{align}
	For $\vect w\in B^-_r$ and $\vect x_i$, we know $(\phi(\vect x_i))(\vect w)=0$.
	Then we have
	\begin{align}
	\lim\limits_{r\rightarrow 0+}\frac{1}{\mu(B_r^-)}\int_{B_r^-}\phi(\vect x_i)(\vect w)d\vect w=\lim\limits_{r\rightarrow 0+}\frac{1}{\mu(B_r^-)}\int_{B_r^-}0d\vect w=0 \label{eqn:i_term_neg}
	\end{align}
	Now recall $\sum_i \alpha_i \phi(\vect x_i)\equiv 0$.
	Using Equation~\eqref{eqn:j_term},~\eqref{eqn:i_term_pos} and~\eqref{eqn:i_term_neg}, we have
	\begin{equation*}
	\begin{split}
	0 &=\lim\limits_{r\rightarrow 0+}\frac{1}{\mu(B_r^+)}\int_{B_r^+}\sum_j \alpha_j\phi(\vect x_j)(\vect w)d\vect w-\lim\limits_{r\rightarrow 0+}\frac{1}{\mu(B_r^-)}\int_{B_r^-}\sum_j \alpha_j\phi(\vect x_j)(\vect w)d\vect w\\
	&=\sum_j \alpha_j\left(\lim\limits_{r\rightarrow 0+}\frac{1}{\mu(B_r^+)}\int_{B_r^+}\phi(\vect x_j)(\vect w)d\vect w-\lim\limits_{r\rightarrow 0+}\frac{1}{\mu(B_r^-)}\int_{B_r^-}\phi(\vect x_j)(\vect w)d\vect w\right)\\
	&=\sum_j \alpha_j\left(\delta_{ij}\vect x_i\right)\\
	&=\alpha_i\vect x_i
	\end{split}
	\end{equation*}	
	Since $\vect{x}_i\neq 0$, we must have $\alpha_i=0$.
	We complete the proof.
\end{proof}

\begin{proof}[Proof of Lemma~\ref{lem:no_overlap}]
	Let $\mu$ be the canonical Lebesgue measure on $D_i$.
	We have $\sum_{j \neq i} \mu (D_i \cap D_j) = 0$ because $D_i \cap D_j$ is a hyperplane in $D_i$.
	Now we bound \begin{align*}
	\mu(D_i \cap  \bigcup_{j \neq i} D_j) \le \sum_{j \neq i} \mu (D_i \cap D_j) = 0.
	\end{align*}
	This implies our desired result.
\end{proof}

\begin{proof}[Proof of Lemma~\ref{lem:H0_close_Hinft}]
For every fixed $(i,j)$ pair, $\mat{H}_{ij}(0)$ is an average of independent random variables.
Therefore, by Hoeffding inequality, we have with probability $1-\delta'$, \begin{align*}
	\abs{\mat{H}_{ij}(0) - \mat{H}_{ij}^\infty} \le \frac{2\sqrt{\log(1/\delta')} }{\sqrt{m}}.
\end{align*}
Setting $\delta' = n^2 \delta$ and applying union bound over $(i,j)$ pairs, we have for every $(i,j)$ pair with probability at least $1-\delta$\begin{align*}
	\abs{\mat{H}_{ij}(0) - \mat{H}_{ij}^\infty} \le \frac{4\sqrt{\log(n/\delta)} }{\sqrt{m}}.
\end{align*}
Thus we have \begin{align*}
\norm{\mat{H}(0)-\mat{H}^\infty}_2^2 \le \norm{\mat{H}(0)-\mat{H}^\infty}_F^2 \le \sum_{i,j}\abs{\mat{H}_{ij}(0) - \mat{H}_{ij}^\infty}^2 \le \frac{16n^2\log(n/\delta)}{m}.
\end{align*}
Thus if $m = \Omega\left(\frac{n^2 \log(n/\delta)}{\lambda_0^2}\right)$ we have the desired result.
\end{proof}

\begin{proof}[Proof of Lemma~\ref{lem:main}]
	Suppose the conclusion does not hold at time $t$.
	If there exists $r\in [m]$, $\norm{\vect{w}_r(t)-\vect{w}_r(0)} \ge R'$ or $\norm{\vect{y}-\vect{u}(t)}_2^2 > \exp(-\lambda_0 t)\norm{\vect{y}-\vect{u}(0)}_2^2$, then by Lemma~\ref{lem:small_perturbation_close_to_init} we know there exists $s \le t$ such that $\lambda_{\min}(\mat{H}(s)) < \frac{1}{2}\lambda_0$.
	By Lemma~\ref{lem:close_to_init_small_perturbation} we know there exists \[t_0 = \inf\left\{t > 0: \max_{r\in[m]}\norm{\vect{w}_r(t)-\vect{w}_r(0)}_2^2 \ge R\right\}.\]
	Thus at $t_0$, there exists $r \in [m]$, $\norm{\vect{w}_r(t_0)-\vect{w}_r(0)}_2^2 = R$.
	Now by Lemma~\ref{lem:close_to_init_small_perturbation}, we know $\mat{H}(t_0) \ge \frac{1}{2}\lambda_0$ for $t' \le t_0$.
	However, by Lemma~\ref{lem:small_perturbation_close_to_init}, we know $\norm{\vect{w}_r(t_0)-\vect{w}_r(0)}_2 < R' < R$.
	Contradiction.
	
	For the other case, at time $t$, $\lambda_{\min}(\mat{H}(t)) < \frac{1}{2}\lambda_0$ we know there exists \[t_0 = \inf\left\{t \ge 0: \max_{r\in[m]}\norm{\vect{w}_r(t)-\vect{w}_r(0)}_2^2 \ge R\right\}.\]
	The rest of the proof is the same as the previous case.
\end{proof}

%\section{Training Two Layers Simultaneously}
%\label{sec:simultaneous_training}
\subsection{Proof of Theorem~\ref{thm:main_gf_joint}}
In this section we show using gradient flow to jointly train both the first layer and the output layer  we can still achieve $0$ training loss.
We follow the same approach we used in Section~\ref{sec:two_layer_regression}.
Recall the gradient for $\vect{a}$.
\begin{align}
\frac{\partial L(\vect{w},\vect{a})}{\partial \vect{a}} = \frac{1}{\sqrt{m}} \sum_{i=1}^{n}\left(f\left(\vect{w},\vect{a},\vect{x}_i\right)-y_i\right) \begin{pmatrix}
\relu{\vect{w}_1^\top \vect{x}_i} \\
\ldots \\
\relu{\vect{w}_m^\top \vect{x}_i} 
\end{pmatrix}.
\label{eqn:grad_a}
\end{align}

We compute the dynamics of an individual prediction.
\begin{align}
\frac{d u_i(t)}{dt} = \sum_{r=1}^{m}\langle \frac{\partial u_i(t)}{\partial \vect{w}_r(t)}, \frac{\partial \vect{w}_r(t)}{\partial t}\rangle + \sum_{r=1}^{m} \frac{d u_i(t)}{d a_r(t)} \cdot \frac{d a_r(t)}{dt}.
\end{align}
Recall we have found a convenient expression for the first term.
\begin{align*}
\sum_{r=1}^{m}\langle \frac{\partial u_i(t)}{\partial \vect{w}_r(t)}, \frac{\partial \vect{w}_r(t)}{\partial t}\rangle  = \sum_{j=1}^{n} \left(y_j-u_j(t)\right)\mat{H}_{ij}(t)
\end{align*}where \[
\mat{H}_{ij}(t) = \frac{1}{m}\vect{x}_i^\top \vect{x}_j \sum_{r=1}^{m} a_r^2(t) \indict\left\{\vect{x}_i^\top \vect{w}_r(t) \ge 0, \vect{x}_j^\top \vect{w}_r(t) \ge 0\right\}.
\]
For the second term, it easy to derive \begin{align*}
\sum_{r=1}^{m} \frac{d u_i(t)}{d\vect{a}_r} \cdot \frac{d \vect{a}_r(t)}{dt} = \sum_{r=1}^{m}\left(y_j - u_j(t)\right)\mat{G}_{ij}(t)
\end{align*}where \begin{align}
\mat{G}_{ij}(t) = \frac{1}{m}\relu{\vect{w}_r^\top \vect{x}_i}\relu{\vect{w}_r^\top \vect{x}_j}. \label{eqn:G_matrix}
\end{align}

Therefore we have \begin{align*}
	\frac{d \vect{u}(t)}{dt} = \left(\mat{H}(t)+\mat{G}(t)\right)\left(\vect{y}-\vect{u}(t)\right).
\end{align*}

First use the same concentration arguments as in Lemma~\ref{lem:H0_close_Hinft}, we can show $\lambda_{\min}(\mat{H}(0)) \ge \frac{3\lambda_0}{4}$ with $1-\delta$ probability over the initialization.
In the following, our arguments will base on that $\lambda_{\min}(\mat{H}(0)) \ge \frac{3\lambda_0}{4}$.

The following lemma shows as long as $\mat{H}(t)$ has lower bounded least eigenvalue, gradient flow enjoys a linear convergence rate.
The proof is analogue to the first part of the proof of Lemma~\ref{lem:small_perturbation_close_to_init}.
\begin{lem}\label{lem:joint_linear_converge}
If for $0 \le s \le t$, $\lambda_{\min}(\mat{H}(s)) \ge \frac{\lambda_0}{2}$, we have $\norm{\vect{y}-\vect{u}(t)}_2^2 \le \exp(-\lambda_0 t)\norm{\vect{y}-\vect{u}(0)}_2^2$.
\end{lem}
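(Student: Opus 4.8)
The plan is to reuse, essentially verbatim, the energy argument from the first half of the proof of Lemma~\ref{lem:small_perturbation_close_to_init}; the only new ingredient is that the additional term $\mat{G}(t)$ appearing in the joint dynamics is positive semidefinite and can therefore simply be discarded.

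First I would record the prediction dynamics just derived, $\frac{d\vect{u}(t)}{dt} = (\mat{H}(t)+\mat{G}(t))(\vect{y}-\vect{u}(t))$, and observe that $\mat{G}(t)$ is the Gram matrix of the feature vectors $\vect{x}_i \mapsto \frac{1}{\sqrt m}\left(\relu{\vect{w}_1(t)^\top\vect{x}_i},\ldots,\relu{\vect{w}_m(t)^\top\vect{x}_i}\right) \in \mathbb{R}^m$, hence $\mat{G}(t)$ is positive semidefinite and $(\vect{y}-\vect{u}(t))^\top\mat{G}(t)(\vect{y}-\vect{u}(t)) \ge 0$ for every $t$. Then I would differentiate the loss:
\begin{align*}
\frac{d}{dt}\norm{\vect{y}-\vect{u}(t)}_2^2 &= -2(\vect{y}-\vect{u}(t))^\top(\mat{H}(t)+\mat{G}(t))(\vect{y}-\vect{u}(t)) \\
&\le -2(\vect{y}-\vect{u}(t))^\top\mat{H}(t)(\vect{y}-\vect{u}(t)) \le -\lambda_0\norm{\vect{y}-\vect{u}(t)}_2^2,
\end{align*}
where the first inequality drops the nonnegative $\mat{G}$-term and the second uses the hypothesis $\lambda_{\min}(\mat{H}(s)) \ge \lambda_0/2$ for all $s \in [0,t]$. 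Consequently $\frac{d}{dt}\left(\exp(\lambda_0 t)\norm{\vect{y}-\vect{u}(t)}_2^2\right) \le 0$, so $\exp(\lambda_0 t)\norm{\vect{y}-\vect{u}(t)}_2^2$ is nonincreasing in $t$, which rearranges to the claimed bound $\norm{\vect{y}-\vect{u}(t)}_2^2 \le \exp(-\lambda_0 t)\norm{\vect{y}-\vect{u}(0)}_2^2$.

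I expect no real obstacle in this particular lemma; the only point needing a word of care is that ReLU is nondifferentiable, so the chain rule above should be justified exactly as in Section~\ref{sec:two_layer_regression} — along the flow the indicator patterns $\indict\left\{\vect{w}_r(t)^\top\vect{x}_i \ge 0\right\}$ are locally constant for all but a measure-zero set of times, so $\vect{u}(t)$ is differentiable for a.e.\ $t$ and the computation is valid. The substantive work for Theorem~\ref{thm:main_gf_joint} is not here but in the companion lemmas that keep $\norm{\mat{H}(t)-\mat{H}(0)}_2$ small, which now additionally requires controlling the movement of the output weights $a_r(t)$ in $\mat{H}_{ij}(t) = \frac{1}{m}\vect{x}_i^\top\vect{x}_j\sum_r a_r^2(t)\indict\left\{\vect{x}_i^\top\vect{w}_r(t)\ge 0,\ \vect{x}_j^\top\vect{w}_r(t)\ge 0\right\}$, not just of $\vect{w}_r(t)$.
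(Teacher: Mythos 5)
Your proposal matches the paper's own proof of this lemma essentially verbatim: differentiate the squared loss, discard the positive semidefinite $\mat{G}(t)$ term, apply the eigenvalue lower bound on $\mat{H}(s)$, and integrate the resulting differential inequality via the monotonicity of $\exp(\lambda_0 t)\norm{\vect{y}-\vect{u}(t)}_2^2$. The argument is correct, and your added remarks on differentiability and on where the real work lies are accurate but not needed for this lemma.
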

\begin{proof}[Proof of Lemma~\ref{lem:joint_linear_converge}]
We can calculate the loss function dynamics\begin{align*}
\frac{d}{dt} \norm{\vect{y}-\vect{u}(t)}_2^2 = &-2 \left(\vect{y}-\vect{u}(t)\right)^\top\left(\mat{H}(t)+\mat{G}(t)\right)\left(\vect{y}-\vect{u}(t)\right) \\
\le &-2 \left(\vect{y}-\vect{u}(t)\right)^\top\left(\mat{H}(t)\right)\left(\vect{y}-\vect{u}(t)\right)\\
\le & -\lambda_0 \norm{\vect{y}-\vect{u}(t)}_2^2
\end{align*}
where in the first inequality we use the fact that $\mat{G}(t)$ is Gram matrix thus it is positive.\footnote{In the proof, we have not take the advantage of $\mat{G}(t)$ being a positive semidefinite matrix. Note if $\mat{G}(t)$ is strictly positive definite, we can achieve faster convergence rate.}
Thus we have $
\frac{d}{dt}\left(\exp(\lambda_0 t)\norm{\vect{y}-\vect{u}(t)}_2^2\right) \le 0
$ 
and
$\exp(\lambda_0 t)\norm{\vect{y}-\vect{u}(t)}_2^2$ is a decreasing function with respect to $t$.
Using this fact we can bound the loss\begin{align*}
\norm{\vect{y}-\vect{u}(t)}_2^2 \le \exp(-\lambda_0 t)\norm{\vect{y}-\vect{u}(0)}_2^2.
\end{align*}
\end{proof}

We continue to follow the analysis in Section~\ref{sec:two_layer_regression}.
For convenience, we define \begin{align*}
R_w = \frac{\sqrt{2\pi}\lambda_0\delta}{32n^2}, R_a = \frac{\lambda_0}{16n^2}, R_w' = \frac{4\sqrt{n}\norm{\vect{y}-\vect{u}_0}_2}{\sqrt{m}\lambda_0}, R_a' =\frac{8\sqrt{n}\norm{\vect{y}-\vect{u}(0)}_2\sqrt{\log(mn/\delta)}}{\sqrt{m}\lambda_0}. 
\end{align*}
The first lemma characterizes how the perturbation in $\vect{a}$ and $\mat{W}$ affect the Gram matrix.  
\begin{lem}\label{lem:joint_close_to_init_small_perb}
With probability at least $1-\delta$ over initialization, if a set of weight vectors $\left\{\vect{w}_r\right\}_{r=1}^m$ and the output weight $\vect{a}$ satisfy
for all $r \in [m]$,  $\norm{\vect{w}_r - \vect{w}_r(0)}_2 \le R_w$ and $\abs{a_r - a_r(0)} \le R_a$, then the 
matrix $\mat{H} \in \mathbb{R}^{n \times n}$ defined by \begin{align*}
	\mat{H}_{ij} = \frac{1}{m} \vect{x}_i^\top \vect{x}_j \sum_{r=1}^{m}\vect{a}_r^2 \indict\left\{\vect{w}_r^\top \vect{x}_i \ge 0, \vect{w}_r^\top \vect{x}_j\ge 0\right\}
\end{align*} satisfies $
\norm{\mat{H} - \mat{H}(0)}_2 \le  \frac{\lambda_0}{4}
$ and $\lambda_{\min}\left(\mat H\right) > \frac{\lambda_0}{2}$.
\end{lem}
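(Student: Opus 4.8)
The plan is to adapt the proof of Lemma~\ref{lem:close_to_init_small_perturbation}, the only new feature being the presence of the (perturbed) output weights squared inside the Gram matrix. First I would observe that because $a_r(0)\in\{-1,1\}$ we have $a_r(0)^2=1$, so the matrix $\mat{H}(0)$ appearing here is literally the matrix $\mat{H}(0)$ of Section~\ref{sec:two_layer_regression}; in particular, on the $1-\delta$ event already established just above this lemma we may assume $\lambda_{\min}(\mat{H}(0))\ge\frac{3}{4}\lambda_0$. It therefore suffices to show $\norm{\mat{H}-\mat{H}(0)}_2<\frac{\lambda_0}{4}$, since then Weyl's inequality gives $\lambda_{\min}(\mat{H})\ge\lambda_{\min}(\mat{H}(0))-\norm{\mat{H}-\mat{H}(0)}_2>\frac{\lambda_0}{2}$.

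For the operator-norm bound I would split each entry of the deviation into a \emph{pattern-change} part and a \emph{magnitude-change} part,
\[
\mat{H}_{ij}-\mat{H}_{ij}(0)=\frac{1}{m}\vect{x}_i^\top\vect{x}_j\sum_{r=1}^m a_r^2\left(\indict\left\{\vect{w}_r^\top\vect{x}_i\ge0,\vect{w}_r^\top\vect{x}_j\ge0\right\}-\indict\left\{\vect{w}_r(0)^\top\vect{x}_i\ge0,\vect{w}_r(0)^\top\vect{x}_j\ge0\right\}\right)+\frac{1}{m}\vect{x}_i^\top\vect{x}_j\sum_{r=1}^m\left(a_r^2-a_r(0)^2\right)\indict\left\{\vect{w}_r(0)^\top\vect{x}_i\ge0,\vect{w}_r(0)^\top\vect{x}_j\ge0\right\},
\]
and call the two $n\times n$ matrices $\mat{H}^{(1)}$ and $\mat{H}^{(2)}$. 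The magnitude term $\mat{H}^{(2)}$ requires no probabilistic argument: from $\abs{a_r-a_r(0)}\le R_a$ we get $\abs{a_r^2-a_r(0)^2}\le R_a(2+R_a)\le 3R_a$, hence every entry of $\mat{H}^{(2)}$ is at most $3R_a$ in absolute value and $\norm{\mat{H}^{(2)}}_2\le\norm{\mat{H}^{(2)}}_F\le 3nR_a$, which the choice $R_a=\frac{\lambda_0}{16n^2}$ keeps below $\frac{\lambda_0}{8}$. For the pattern term $\mat{H}^{(1)}$ I would run the anti-concentration argument of Lemma~\ref{lem:close_to_init_small_perturbation} essentially verbatim: with $A_{ir}=\{\exists\vect{w}:\norm{\vect{w}-\vect{w}_r(0)}\le R_w, \indict\{\vect{x}_i^\top\vect{w}_r(0)\ge0\}\neq\indict\{\vect{x}_i^\top\vect{w}\ge0\}\}$, which happens iff $\abs{\vect{w}_r(0)^\top\vect{x}_i}<R_w$, Gaussian anti-concentration gives $P(A_{ir})\le\frac{2R_w}{\sqrt{2\pi}}$. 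Since for any admissible perturbation the joint pattern $\indict\{\vect{w}_r^\top\vect{x}_i\ge0,\vect{w}_r^\top\vect{x}_j\ge0\}$ differs from its value at initialization only on $A_{ir}\cup A_{jr}$, and since $a_r^2\le(1+R_a)^2$, we obtain $\abs{\mat{H}^{(1)}_{ij}}\le\frac{(1+R_a)^2}{m}\sum_{r=1}^m\indict\{A_{ir}\cup A_{jr}\}$ \emph{uniformly} over the admissible $\{\vect{w}_r\}$ and $\vect{a}$; this uniformity is what lets the probability be taken over the initialization only. Taking expectations, $\expect\sum_{i,j}\abs{\mat{H}^{(1)}_{ij}}\le\frac{4n^2(1+R_a)^2R_w}{\sqrt{2\pi}}$, and Markov's inequality gives, with probability $1-\delta$, $\norm{\mat{H}^{(1)}}_2\le\sum_{i,j}\abs{\mat{H}^{(1)}_{ij}}\le\frac{4n^2(1+R_a)^2R_w}{\sqrt{2\pi}\delta}$, which the choice $R_w=\frac{\sqrt{2\pi}\lambda_0\delta}{32n^2}$ keeps below $\frac{\lambda_0}{8}$ since $R_a$ is tiny. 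Adding the two estimates via the triangle inequality yields $\norm{\mat{H}-\mat{H}(0)}_2<\frac{\lambda_0}{4}$.

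The only step I expect to require care, and it is minor, is the constant bookkeeping: the budget $\frac{\lambda_0}{4}$ now has to absorb both the pattern-change and the magnitude-change contributions, which is exactly why $R_a$ is forced to scale like $\lambda_0/n^2$ rather than being an absolute constant, and one must track that the factor $(1+R_a)^2$ (the worst-case value of $a_r^2$ after perturbation) does not erode the slack in the bound on $\mat{H}^{(1)}$. Everything else is identical to the proof of Lemma~\ref{lem:close_to_init_small_perturbation}.
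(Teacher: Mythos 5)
Your proposal is correct and follows essentially the same route as the paper: the paper also splits the deviation via an intermediate matrix (it freezes the output weights at $a_r^2=1$ with the perturbed indicators, whereas you freeze the indicators at initialization with the perturbed $a_r^2$), bounds the output-weight part deterministically by $\abs{a_r^2-a_r(0)^2}\lesssim R_a$, and reuses the anti-concentration plus Markov argument of Lemma~\ref{lem:close_to_init_small_perturbation} for the pattern-change part before invoking $\lambda_{\min}(\mat{H}(0))\ge \frac{3}{4}\lambda_0$. The only differences are this choice of interpolation order and your explicit tracking of the harmless $(1+R_a)^2$ factor, which do not change the argument.
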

\begin{proof}[Proof of Lemma~\ref{lem:joint_close_to_init_small_perb}]
Define a surrogate Gram matrix,
\begin{align*}
\mat{H}' = \frac{1}{m}\sum_{r=1}^{m} \indict\left\{\vect{w}_r^\top \vect{x}_i \ge 0, \vect{w}_r^\top \vect{x}_j\ge 0\right\}.
\end{align*}
Using the same analysis for Lemma~\ref{lem:small_perturbation_close_to_init}, we know $
\norm{\mat{H}' - \mat{H}(0)}_2 \le \frac{4n^2 R_w}{\sqrt{2\pi}\delta}.
$
Now we bound $\mat{H} - \mat{H}'$. 
For fixed $(i,j) \in [n] \times [n]$, we have\begin{align*}
\mat{H}_{ij} - \mat{H}_{ij}' = & \abs{\vect{x}_i^\top\vect{x}_j \frac{1}{m}\sum_{r=1}^{m} (\vect{a}_r^2 -1) \indict\left\{\vect{w}_r^\top \vect{x}_i \ge 0, \vect{w}_r^\top \vect{x}_j \ge 0\right\}} 
\le \max_{r \in [m]} \abs{\vect{a}_r^2 - 1}
\le  2R_a.
\end{align*}
Therefore, we have $
	\norm{\mat{H} - \mat{H}'}_2 \le \sum_{(i,j) \in [n] \times [n]} \abs{\mat{H}_{ij}-\mat{H}_{ij}'}\le 2n^2 R_a.
$
Combing these two inequalities we have $
\norm{\mat{H} - \mat{H}(0)}_2 \le \frac{4n^2R_w}{\sqrt{2\pi}\delta} + 2n^2 R_a \le \frac{\lambda_0}{4}.
$
\end{proof}

\begin{lem}\label{lem:perturb_w}
Suppose for $0 \le s \le t$, $\lambda_{\min}\left(\mat{H}(s)\right) \ge \frac{\lambda_0}{2}$ and $\abs{\vect{a}_r(s) - \vect{a}_r(0)} \le R_a $.
Then we have $\norm{\vect{w}_r(t)-\vect{w}_r(0)}_2 \le   R'_w$.
\end{lem}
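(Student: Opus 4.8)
The plan is to follow the template of the proof of Lemma~\ref{lem:small_perturbation_close_to_init}: first establish linear convergence of the residual, then bound the speed $\norm{d\vect{w}_r(s)/ds}_2$ along the trajectory, and finally integrate. Since by hypothesis $\lambda_{\min}(\mat{H}(s)) \ge \lambda_0/2$ for all $s \in [0,t]$, Lemma~\ref{lem:joint_linear_converge} applies and yields $\norm{\vect{y}-\vect{u}(s)}_2^2 \le \exp(-\lambda_0 s)\norm{\vect{y}-\vect{u}(0)}_2^2$; taking square roots gives $\norm{\vect{y}-\vect{u}(s)}_2 \le \exp(-\lambda_0 s/2)\norm{\vect{y}-\vect{u}(0)}_2$ for every $s \le t$.

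Next I would bound the velocity of each first-layer weight vector. From the gradient formula in the joint-training setting,
\[
\frac{d\vect{w}_r(s)}{ds} = -\frac{\partial L(\mat{W}(s),\vect{a}(s))}{\partial \vect{w}_r(s)} = \frac{1}{\sqrt{m}}\sum_{i=1}^n (y_i - u_i(s))\, a_r(s)\, \vect{x}_i \indict\{\vect{w}_r(s)^\top \vect{x}_i \ge 0\},
\]
so by the triangle inequality, $\norm{\vect{x}_i}_2 = 1$, and Cauchy--Schwarz,
\[
\norm{\frac{d\vect{w}_r(s)}{ds}}_2 \le \frac{\abs{a_r(s)}}{\sqrt{m}}\sum_{i=1}^n \abs{y_i-u_i(s)} \le \frac{\abs{a_r(s)}\sqrt{n}}{\sqrt{m}}\,\norm{\vect{y}-\vect{u}(s)}_2 .
\]
The only ingredient beyond the first-layer-only argument is controlling $\abs{a_r(s)}$: the hypothesis $\abs{a_r(s)-a_r(0)}\le R_a$, the initialization $\abs{a_r(0)}=1$, and the fact that $R_a = \lambda_0/(16n^2) < 1$ (which follows from $\lambda_0 \le \tr(\mat{H}^\infty)/n = 1/2$) give $\abs{a_r(s)}\le 2$. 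Plugging in the linear-convergence bound, $\norm{d\vect{w}_r(s)/ds}_2 \le \frac{2\sqrt{n}}{\sqrt{m}}\exp(-\lambda_0 s/2)\norm{\vect{y}-\vect{u}(0)}_2$.

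Finally I would integrate the velocity over $[0,t]$ and extend the integral to $[0,\infty)$:
\[
\norm{\vect{w}_r(t)-\vect{w}_r(0)}_2 \le \int_0^t \norm{\frac{d\vect{w}_r(s)}{ds}}_2\,ds \le \frac{2\sqrt{n}\norm{\vect{y}-\vect{u}(0)}_2}{\sqrt{m}}\int_0^\infty \exp(-\lambda_0 s/2)\,ds = \frac{4\sqrt{n}\norm{\vect{y}-\vect{u}(0)}_2}{\sqrt{m}\lambda_0} = R_w',
\]
which is exactly the claimed bound. There is essentially no obstacle here: the argument is a near-verbatim copy of the proof of Lemma~\ref{lem:small_perturbation_close_to_init}, and the only genuinely new point is the crude estimate $\abs{a_r(s)}\le 2$, which (together with the factor $2/\lambda_0$ from integrating the decaying exponential) is precisely why the radius $R_w'$ carries the extra constant factor $4$ compared with the first-layer-only case.
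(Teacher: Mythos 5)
Your proof is correct and follows essentially the same route as the paper's: invoke Lemma~\ref{lem:joint_linear_converge} for the exponential decay of the residual, bound $\norm{d\vect{w}_r(s)/ds}_2$ using $\abs{a_r(s)}\le\abs{a_r(0)}+R_a\le 2$, and integrate the decaying exponential to obtain $R_w'$. The only addition is your explicit justification that $R_a<1$, which the paper leaves implicit.
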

\begin{proof}[Proof of Lemma~\ref{lem:perturb_w}]
We bound the gradient.
Recall for $0 \le s \le t$, \begin{align*}
\norm{\frac{d}{dt}\vect{w}_r(s)}_2 = &\norm{\sum_{i=1}^n(y_i-u_i)\frac{1}{\sqrt{m}}a_r(t)\vect{x}_i\indict\left\{\vect{w}_r(t)^\top \vect{x}_i \ge 0\right\}}_2 \\
\le &\frac{1}{\sqrt{m}}\sum_{i=1}^{n} \abs{y_i-u_i(s)}\abs{a_r(0)+R_a}\\
\le &\frac{2\sqrt{n}}{\sqrt{m}}\norm{\vect{y}-\vect{u}(s)}_2 
\le \frac{2\sqrt{n}}{\sqrt{m}} \exp(-\lambda_0 s/2)\norm{\vect{y}-\vect{u}(0)}_2.
\end{align*}
Integrating the gradient and using Lemma~\ref{lem:joint_linear_converge}, we can bound the distance from the initialization \begin{align*}
\norm{\vect{w}_r(t)-\vect{w}_r(0)}_2 \le \int_{0}^{t}\norm{\frac{d}{ds}\vect{w}_r(s)}_2 ds \le  \frac{4\sqrt{n}\norm{\vect{y}-\vect{u}(0)}_2}{\sqrt{m}\lambda_0}. 
\end{align*}

\end{proof}

\begin{lem}\label{lem:perturb_a}
	With probability at least $1-\delta$ over initialization, the following holds.
	Suppose for $0 \le s \le t$, $\lambda_{\min}\left(\mat{H}(s)\right) \ge \frac{\lambda_0}{2}$ and $\norm{\vect{w}_r(s)-\vect{w}_r(0)}_2\le R_w$.
	Then we have $\abs{a(t)-a_r(0)} \le R'_a$ for all $r\in[m]$.
\end{lem}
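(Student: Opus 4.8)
The plan is to mirror the proof of Lemma~\ref{lem:perturb_w}: bound $\bigl|\tfrac{d}{ds}a_r(s)\bigr|$ uniformly over $s\in[0,t]$ and then integrate. From the gradient formula~\eqref{eqn:grad_a}, gradient flow on the output layer gives $\tfrac{d a_r(s)}{ds} = -\tfrac{\partial L}{\partial a_r} = \tfrac{1}{\sqrt m}\sum_{i=1}^n (y_i-u_i(s))\,\relu{\vect w_r(s)^\top\vect x_i}$, so by Cauchy--Schwarz $\bigl|\tfrac{d a_r(s)}{ds}\bigr| \le \tfrac{\sqrt n}{\sqrt m}\,\norm{\vect y-\vect u(s)}_2\cdot\max_{i\in[n]}\abs{\relu{\vect w_r(s)^\top\vect x_i}}$.

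The one genuinely new ingredient compared with Lemma~\ref{lem:perturb_w} is that the output-layer gradient is \emph{not} uniformly $O(1)$: in the $\vect w_r$-gradient each summand has norm $\abs{a_r}\norm{\vect x_i}_2=O(1)$, but here each summand carries the factor $\abs{\relu{\vect w_r(s)^\top\vect x_i}}\le\abs{\vect w_r(s)^\top\vect x_i}$, which can grow. I would split $\vect w_r(s)^\top\vect x_i = \vect w_r(0)^\top\vect x_i + (\vect w_r(s)-\vect w_r(0))^\top\vect x_i$; the second term is at most $\norm{\vect w_r(s)-\vect w_r(0)}_2\le R_w$ by hypothesis, and the first term is a standard Gaussian since $\norm{\vect x_i}_2=1$. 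A Gaussian tail bound plus a union bound over $r\in[m]$, $i\in[n]$ shows that with probability at least $1-\delta$ over the initialization, $\max_{r,i}\abs{\vect w_r(0)^\top\vect x_i}\le C\sqrt{\log(mn/\delta)}$ for an absolute constant $C$ --- this is the only place randomness enters the lemma. Hence $\max_i\abs{\relu{\vect w_r(s)^\top\vect x_i}}\le C\sqrt{\log(mn/\delta)}+R_w = O\!\left(\sqrt{\log(mn/\delta)}\right)$ for every $s\in[0,t]$.

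Combining this with Lemma~\ref{lem:joint_linear_converge}, which under the stated eigenvalue hypothesis gives $\norm{\vect y-\vect u(s)}_2 \le \exp(-\lambda_0 s/2)\norm{\vect y-\vect u(0)}_2$, we obtain $\bigl|\tfrac{d a_r(s)}{ds}\bigr| \le \tfrac{C\sqrt{n\log(mn/\delta)}}{\sqrt m}\exp(-\lambda_0 s/2)\norm{\vect y-\vect u(0)}_2$. Integrating from $0$ to $t$ and bounding $\int_0^t\exp(-\lambda_0 s/2)\,ds\le 2/\lambda_0$ gives $\abs{a_r(t)-a_r(0)} \le \tfrac{2C\sqrt n\,\norm{\vect y-\vect u(0)}_2\sqrt{\log(mn/\delta)}}{\sqrt m\,\lambda_0}$, which is at most $R_a'$ once the absolute constant is absorbed; since the bound on the integrand holds simultaneously for all $r\in[m]$, so does the conclusion.

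The main (and essentially only) obstacle is the observation above that the output-layer gradient lacks the automatic $O(1)$ bound enjoyed by the hidden-layer gradient, forcing control of $\max_{r,i}\abs{\vect w_r(0)^\top\vect x_i}$. The extra $\sqrt{\log(mn/\delta)}$ factor it produces is precisely what makes $R_a'$ larger than $R_w'$ by a $\sqrt{\log(mn/\delta)}$ factor, and in turn what forces the additional $\log(m/\delta)$ in the over-parameterization requirement of Theorem~\ref{thm:main_gf_joint} relative to Theorem~\ref{thm:main_gf}. Everything else is a direct transcription of the integration argument used for Lemma~\ref{lem:perturb_w}.
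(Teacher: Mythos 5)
Your proposal is correct and matches the paper's own proof essentially step for step: a Gaussian tail bound with a union bound over $r\in[m]$, $i\in[n]$ to control $\abs{\vect w_r(0)^\top\vect x_i}$ by $O(\sqrt{\log(mn/\delta)})$, the decomposition of $\vect w_r(s)^\top\vect x_i$ into its initial value plus a perturbation bounded by $R_w$, the linear-convergence bound on $\norm{\vect y-\vect u(s)}_2$, and integration of the gradient. Your accompanying observation about why the $\sqrt{\log(mn/\delta)}$ factor appears (and hence why $R_a'$ exceeds $R_w'$) is exactly the right diagnosis.
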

\begin{proof}[Proof of Lemma~\ref{lem:perturb_a}]
Note for any $i \in [n]$ and $r \in [m]$, $\vect{w}_r(0)^\top \vect{x}_i \sim N(0,1)$.
Therefore applying Gaussian tail bound and union bound we have with probability at least $1-\delta$, for all $i \in [n]$ and $r \in [m]$, $\abs{\vect{w}_r(0)^\top \vect{x}_i} \le 3\sqrt{\log\left(\frac{mn}{\delta}\right)}$.
	Now we bound the gradient.
	Recall for $0 \le s \le t$, \begin{align*}
	\abs{\frac{d}{ds}a_r(s)} = 
	&\abs{\frac{1}{\sqrt{m}} \sum_{i=1}^{n}\left(f\left(\vect{w}(s),\vect{a}(s),\vect{x}_i\right)-y_i\right) \left(\relu{\vect{w}_r(s)^\top \vect{x}_i}\right)} \\
	\le & \frac{1}{\sqrt{m}} \sqrt{n}\norm{\vect{y}-\vect{u}(s)}_2 \left(\abs{\vect{w}_r(s)^\top \vect{x}_i} + R_w  \right) \\
	\le & \frac{1}{\sqrt{m}} \sqrt{n}\norm{\vect{y}-\vect{u}(s)}_2 \left(3\sqrt{\log\left(\frac{mn}{\delta}\right)}+ R_w  \right)  \le \frac{4\sqrt{n}\norm{\vect{y}-\vect{u}(0)}\exp(-\lambda_0 s/2)\sqrt{\log\left(mn/\delta\right)}}{\sqrt{m}}.
	\end{align*}
	Integrating the gradient, we can bound the distance from the initialization \begin{align*}
	\norm{\vect{a}_r(t)-\vect{a}_r(0)}_2 \le \int_{0}^{t}\norm{\frac{d}{ds}\vect{w}_r(s)}_2 ds \le  \frac{8\sqrt{n}\norm{\vect{y}-\vect{u}(0)}_2\sqrt{\log(mn/\delta)}}{\sqrt{m}\lambda_0}. 
	\end{align*}
	
\end{proof}

\begin{lem}\label{lem:joint_continuous_induction}
If $R_w' < R_w$ and $R_a' < R_a$, we have for all $t \ge 0 $, $\lambda_{\min}\mat{(H}(t)) \ge \frac{1}{2}\lambda_0$, for all $r \in [m]$, $\norm{\vect{w}_r(t)-\vect{w}_r(0)}_2 \le R_w'$, $\abs{a_r(t)-a_r(0)}\le R_a'$ and $\norm{\vect{y}-\vect{u}(t)}_2^2 \le \exp(-\lambda_0 t)\norm{\vect{y}-\vect{u}(0)}_2^2$.
\end{lem}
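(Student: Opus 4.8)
The plan is to imitate the contradiction argument from the proof of Lemma~\ref{lem:main}, the only new wrinkle being that now I must track \emph{both} the first-layer vectors $\vect{w}_r$ and the output weights $a_r$ simultaneously. Throughout I would condition on the intersection of the high-probability events in Lemmas~\ref{lem:joint_close_to_init_small_perb} and~\ref{lem:perturb_a}, together with $\lambda_{\min}(\mat{H}(0)) \ge \tfrac{3\lambda_0}{4}$; a union bound (absorbing constants into the $\Omega(\cdot)$ requirement on $m$) makes all of these hold with probability at least $1-\delta$.

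First I would introduce the joint stopping time
\[
t_0 = \inf\Big\{ t \ge 0 : \max_{r\in[m]} \norm{\vect{w}_r(t)-\vect{w}_r(0)}_2 \ge R_w \ \ \text{or}\ \ \max_{r\in[m]} \abs{a_r(t)-a_r(0)} \ge R_a \Big\},
\]
with the convention $\inf \emptyset = +\infty$. Since the trajectories $t\mapsto \vect{w}_r(t)$ and $t\mapsto a_r(t)$ are continuous (they solve the gradient-flow ODE with bounded right-hand side) and both maxima vanish at $t=0$, we have $t_0 > 0$, and on $[0,t_0)$ every $\vect{w}_r$ stays within $R_w$ and every $a_r$ within $R_a$ of its initialization. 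The crux is to show $t_0 = \infty$. Suppose instead $t_0 < \infty$; by continuity the bounds $\norm{\vect{w}_r(s)-\vect{w}_r(0)}_2 \le R_w$ and $\abs{a_r(s)-a_r(0)} \le R_a$ then hold for all $s \in [0,t_0]$ and all $r$, with equality in at least one of them at $s = t_0$. Applying Lemma~\ref{lem:joint_close_to_init_small_perb} at each $s \in [0,t_0]$ gives $\lambda_{\min}(\mat{H}(s)) > \tfrac{\lambda_0}{2}$ on $[0,t_0]$; feeding this into Lemma~\ref{lem:perturb_w} (whose hypothesis on $a_r$ is satisfied on $[0,t_0]$) yields $\norm{\vect{w}_r(t_0)-\vect{w}_r(0)}_2 \le R_w'$, and feeding it into Lemma~\ref{lem:perturb_a} (whose hypothesis on $\vect{w}_r$ is satisfied on $[0,t_0]$) yields $\abs{a_r(t_0)-a_r(0)} \le R_a'$, for every $r$. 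Because the hypothesis of this lemma is exactly $R_w' < R_w$ and $R_a' < R_a$, both maxima at $t_0$ are \emph{strictly} below their thresholds, contradicting the definition of $t_0$. Hence $t_0 = \infty$.

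Once $t_0 = \infty$, the conclusions fall out: for every $t \ge 0$ we have $\norm{\vect{w}_r(t)-\vect{w}_r(0)}_2 \le R_w$ and $\abs{a_r(t)-a_r(0)} \le R_a$, so Lemma~\ref{lem:joint_close_to_init_small_perb} gives $\lambda_{\min}(\mat{H}(t)) \ge \tfrac{\lambda_0}{2}$ for all $t$; Lemma~\ref{lem:joint_linear_converge} then gives $\norm{\vect{y}-\vect{u}(t)}_2^2 \le \exp(-\lambda_0 t)\norm{\vect{y}-\vect{u}(0)}_2^2$; and re-invoking Lemmas~\ref{lem:perturb_w} and~\ref{lem:perturb_a} with this global spectral lower bound sharpens the deviation bounds from $R_w, R_a$ down to the claimed $R_w', R_a'$. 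The step I expect to be the main obstacle is precisely the circular dependence between Lemmas~\ref{lem:perturb_w} and~\ref{lem:perturb_a} — each controls one family of parameters only under an assumed bound on the other — and the joint stopping time $t_0$ is the device that breaks it, since on $[0,t_0)$ both assumptions hold at once. A secondary bookkeeping check is that $m = \Omega\!\big(\tfrac{n^6\log(m/\delta)}{\lambda_0^4\delta^3}\big)$ together with $\norm{\vect{y}-\vect{u}(0)}_2^2 = O(n/\delta)$ (via Markov's inequality, as in Theorem~\ref{thm:main_gf}) indeed makes $R_w' < R_w$ and $R_a' < R_a$; substituting the closed-form expressions for the four radii reduces this to a direct inequality.
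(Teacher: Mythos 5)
Your proposal is correct and follows essentially the same route as the paper: a continuity/contradiction argument at the first time the weight or output-weight deviations could reach $R_w$ or $R_a$, using Lemma~\ref{lem:joint_close_to_init_small_perb} to keep $\lambda_{\min}(\mat{H}(s))\ge\frac{\lambda_0}{2}$ up to that time and Lemmas~\ref{lem:perturb_w}, \ref{lem:perturb_a} (plus Lemma~\ref{lem:joint_linear_converge} for the loss bound) to force the deviations strictly below the thresholds, yielding the contradiction. Your explicit joint stopping time $t_0$ is just a cleaner writing of the paper's ``smallest time the conclusion fails'' device, and the final remark about checking $R_w'<R_w$, $R_a'<R_a$ belongs to the proof of Theorem~\ref{thm:main_gf_joint} rather than to this lemma, which simply assumes those inequalities.
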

\begin{proof}[Proof of Lemma~\ref{lem:joint_continuous_induction}]
We prove by contradiction.
Let $t > 0$ be the smallest time that the conclusion does not hold.
Then either $\lambda_{\min}(\mat{H}(t)) < \frac{\lambda_0}{2}$ or there exists $r \in [m]$, $\norm{\vect{w}_r(t) - \vect{w}_r(0)}_2 \le R_w'$ or there exists $r \in [m]$, $\abs{a_r - a_r(0)} \le R_a'$.
If $\lambda_{\min}(\mat{H}(t)) < \frac{\lambda_0}{2}$, by Lemma~\ref{lem:joint_close_to_init_small_perb}, we know there exists $s < t$ such that either  there exists $r \in [m]$, $\norm{\vect{w}_r(s) - \vect{w}_r(0)}_2 \le R_w$ or there exists $r \in [m]$, $\abs{a_r(s) - a_r(0)} \le R_a$.
However, since $R_w' < R_w$ and $R_a' < R_a$.
This contradicts with the minimality of $t$.
If there exists $r \in [m]$, $\norm{\vect{w}_r - \vect{w}_r(0)}_2 \le R_w'$, then by Lemma~\ref{lem:perturb_w}, we know there exists $s < t$ such that $r \in [m]$, $\abs{a_r(s) - a_r(0)} \le R_a$ or $\lambda_{\min}(\mat{H}(s)) < \frac{\lambda_0}{2}$.
However, since $R_w' < R_w$ and $R_a' < R_a$.
This contradicts with the minimality of $t$.
The last case is similar for which we can simply apply Lemma~\ref{lem:perturb_a}.
\end{proof}

Based on Lemma~\ref{lem:joint_linear_converge}, we only need to ensure $R_w' < R_w$ and $R_a < R_a'$. 
By the proof in Section~\ref{sec:two_layer_regression}, we know with probability at least $\delta$, $\norm{\vect{y}-\vect{u}(0)}_2 \le \frac{C\sqrt{n}}{\delta}$ for some large constant $C$.
Note our section on $m$ in Theorem~\ref{thm:main_gf_joint} suffices to ensure $R_w' < R_w$ and $R_a < R_a'$.
 We now complete the proof.

\section{Technical Proofs for Section~\ref{sec:discrete}}
\label{sec:technical_proofs_discrete}
\begin{proof}[Proof of Corollary~\ref{cor:dist_from_init}]
	We use the norm of gradient to bound this distance.
	\begin{align*}
	\norm{\vect{w}_r(k+1)-\vect{w}_r(0)}_2 \le & \eta \sum_{k'=0}^{k} \norm{\frac{\partial L(\mat{W}(k'))}{\partial \vect{w}_r(k')}}_2 \\
	\le & \eta \sum_{k'=0}^{k} \frac{\sqrt{n}\norm{\vect{y}-\vect{u}(k')}_2}{\sqrt{m}} \\
	\le &\eta \sum_{k'=0}^{k}\frac{\sqrt{n}(1-\frac{\eta \lambda}{2})^{k'/2}}{\sqrt{m}} \norm{\vect{y}-\vect{u}(k')}_2 \\
	\le &\eta \sum_{k'=0}^{\infty}\frac{\sqrt{n}(1-\frac{\eta \lambda_0}{2})^{k'/2}}{\sqrt{m}} \norm{\vect{y}-\vect{u}(k')}_2 \\
% 	\le &\eta \sum_{k'=0}^{\infty}\frac{\sqrt{n}(1-\frac{\eta \lambda_0}{2})^{k/2}}{\sqrt{m}} \norm{\vect{y}-\vect{u}(k')}_2 \\
	= &\frac{4\sqrt{n}\norm{\vect{y}-\vect{u}(0)}_2}{\sqrt{m} \lambda_0}.
	\end{align*}
\end{proof}

\begin{proof}[Proof of Lemma~\ref{lem:bounds_Si_perp}]
	For a fixed $i \in [n]$ and $r\in [m]$, by anti-concentration inequality, we know $\prob(A_{ir}) \le \frac{2R}{\sqrt{2\pi}}$.
	Thus we can bound the size of $S_i^\perp$ in expectation.
	\begin{align}
	\expect\left[\abs{S_i^\perp}\right] = \sum_{r=1}^{m} \prob(A_{ir}) \le \frac{2mR}{\sqrt{2\pi}}.
	\end{align}
	Summing over $i=1,\ldots,n$, we have \begin{align*}
	\expect\left[\sum_{i=1}^{n}\abs{S_i^\perp}\right]  \le \frac{2mnR}{\sqrt{2\pi}}.
	\end{align*}
	Thus by Markov's inequality, we have with probability at least $1-\delta$\begin{align}
	\sum_{i=1}^{n}\abs{S_i^\perp} \le \frac{C mnR}{\delta}.
	\end{align} for some large positive constant $C>0$.
\end{proof}

\end{document}